\newtheorem{assumption}{Assumption}
\newtheorem{lemma}{Lemma}
\newtheorem{theorem}{Theorem}
\newtheorem*{lemma*}{Lemma}
\newtheorem*{theorem*}{Theorem}
\icmltitlerunning{On the Generalization Gap in Reparameterizable RL}
\begin{document}

\twocolumn[
\icmltitle{On the Generalization Gap in Reparameterizable Reinforcement Learning}




\begin{icmlauthorlist}
\icmlauthor{Huan Wang}{to}
\icmlauthor{Stephan Zheng}{to}
\icmlauthor{Caiming Xiong}{to}
\icmlauthor{Richard Socher}{to}
\end{icmlauthorlist}

\icmlaffiliation{to}{Salesforce Research, Palo Alto CA, USA}

\icmlcorrespondingauthor{Huan Wang}{huan.wang@salesforce.com}

\icmlkeywords{Reinforcement Learning, Generalization, Reparameterization, Overfit}

\vskip 0.3in
]



\printAffiliationsAndNotice{}  

\begin{abstract}
Understanding generalization in reinforcement learning (RL) is a significant challenge, as many common assumptions of traditional supervised learning theory do not apply. 
%
We focus on the special class of \textbf{reparameterizable RL} problems, where the trajectory distribution can be decomposed using the reparametrization trick.
%
%
For this problem class, estimating the expected return is efficient and the trajectory can be computed deterministically given peripheral random variables, 
which enables us to study reparametrizable RL using supervised learning and transfer learning theory.  
%
%
%
Through these relationships, we derive guarantees on the gap between the expected and empirical return for both intrinsic and external errors, based on Rademacher complexity as well as the PAC-Bayes bound. 
Our bound suggests the generalization capability of reparameterizable RL is related to multiple factors including ``smoothness'' of the environment transition, reward and agent policy function class.
We also empirically verify the relationship between the generalization gap and these factors through simulations.
\end{abstract}

\section{Introduction}
Reinforcement learning (RL) has proven successful in a series of applications such as games \citep{Silver16,Silver17,Mnih15,Vinyals17,OpenAI_dota}, robotics \citep{Kober2013}, recommendation systems \citep{Li10,Shani05}, resource management \citep{Mao2016,Mirhoseini18}, neural architecture design \citep{Baker17}, and more. However some key questions in reinforcement learning remain unsolved. One that draws more and more attention is the  issue of overfitting in reinforcement learning \citep{Sutton95,Cobbe18,Zhang18,Packer18,Zhang18b}. A model that performs well in the training environment, may or may not perform well when used in the testing environment. There is also a growing interest in understanding the conditions for model generalization and developing algorithms that improve generalization. 

In general we would like to measure how accurate an algorithm is able to predict on previously unseen data. One metric of interest is the gap between the training and testing loss or reward. 
It has been observed that such gaps are related to multiple factors: initial state distribution, environment transition, the level of ``difficulty'' in the environment, model architectures, and optimization. 
\citet{Zhang18} split randomly sampled initial states into training and testing and evaluated the performance gap in deep reinforcement learning. 
They empirically observed overfitting caused by the randomness of the environment, even if the initial distribution and the transition in the testing environment are kept the same as training. 
On the other hand, \citet{Farebrother18,Justesen18,Cobbe18} allowed the test environment to vary from training, and observed huge differences in testing performance. 
\citet{Packer18} also reported very different testing behaviors across models and algorithms, even for the same RL problem.

Although overfitting has been empirically observed in RL from time to time, theoretical guarantees on generalization, especially finite-sample guarantees, are still missing. 
In this work, we focus on \emph{on-policy RL}, where agent policies are trained based on episodes of experience that are sampled ``on-the-fly'' using the current policy in training. 
We identify two major obstacles in the analysis of on-policy RL.
First, the episode distribution keeps changing as the policy gets updated during optimization. 
Therefore, episodes have to be continuously redrawn from the new distribution induced by the updated policy during optimization.
For finite-sample analysis, this leads to a process with complex dependencies. 
Second, state-of-the-art research on RL tends to mix the errors caused by randomness in the environment and shifts in the environment distribution. 
We argue that actually these two types of errors are very different. One, which we call intrinsic error, is analogous to overfitting in supervised learning, and the other, called external error, looks more like the errors in transfer learning.

Our key observation is there exists a special class of RL, called \textbf{reparameterizable RL}, where randomness in the environment can be decoupled from the transition and initialization procedures via the reparameterization trick \citep{Kingma14}. 
Through reparameterization, an episode's dependency on the policy is ``lifted'' to the states. Hence, as the policy gets updated, episodes are deterministic given peripheral random variables. 
%
As a consequence, the expected reward in reparameterizable RL is connected to the Rademacher complexity as well as the PAC-Bayes bound. 
The reparameterization trick also makes the analysis for the second type of errors, i.e., when the environment distribution is shifted, much easier since the environment parameters are also ``lifted'' to the representation of states.

\paragraph{Related Work}
Generalization in reinforcement learning has been investigated a lot both theoretically and empirically. 
Theoretical work includes bandit analysis \citep{Agarwal14,Auer02,Auer09,beygelzimer11}, Probably Approximately Correct (PAC) analysis \citep{Jiang17,Dann17,Strehl09,Lattimore14} as well as minimax analysis \citep{Azar17,Chakravorty03}.
Most works focus on the analysis of regret and consider the gap between the expected value and optimal return. 
On the empirical side, besides the previously mentioned work, \citet{Whiteson11} proposes generalized methodologies that are based on multiple environments sampled from a distribution. 
\citet{Nair15} also use random starts to test generalization. 

Other research has also examined generalization from a transfer learning perspective. 
\citet{Lazaric12,Taylor09,Zhan15,Laroche17} examine model generalization across different learning tasks, and provide guarantees on asymptotic performance.

There are also works in robotics for transferring policy from simulator to real world and optimizing an internal model from data \citep{Kearns2002}, or works trying to solve abstracted or compressed MDPs \citep{Majeed18}.


\textbf{Our Contributions}:
\begin{itemize}
    \item A connection between (on-policy) reinforcement learning and supervised learning through the reparameterization trick. It simplifies the finite-sample analysis for RL, and yields Rademacher and PAC-Bayes bounds on Markov Decision Processes (MDP).
    \item Identifying a class of \textbf{reparameterizable RL} and providing a simple bound for ``smooth'' environments and models with a limited number of parameters.
    \item A guarantee for reparameterized RL when the environment is changed during testing. In particular we discuss two cases in environment shift: change in the initial distribution for the states, or the transition function.
\end{itemize}





\section{Notation and Formulation}\label{sec:notation}

We denote a Markov Decision Process (MDP) as a $5$-tuple $(\mathcal{S}, \mathcal{A}, \mathcal{P}, r, \mathcal{P}_0)$. Here $\mathcal{S}$ is the state space, $\mathcal{A}$ is the action-space, $\mathcal{P}(s, a, s'):\mathcal{S}\times \mathcal{A}\times \mathcal{S}\rightarrow [0,1]$ is the transition probability from state $s$ to $s'$ when taking action $a$, $r(s): \mathcal{S}\rightarrow \mathbb{R}$ represents the reward function, and $\mathcal{P}_0(s):\mathcal{S}\rightarrow [0,1]$ is the initial state distribution. 
Let $\pi(s)\in \Pi: \mathcal{S}\rightarrow \mathcal{A}$ be the policy map that returns the action $a$ at state $s$. 

We consider episodic MDPs with a finite horizon.
Given the policy map $\pi$ and the transition probability $\mathcal{P}$, the state-to-state transition probability is 
$\mathcal{T}_{\pi}(s, s') = \mathcal{P}(s, \pi(s), s')$. 
Without loss of generality, the length of the episode is $T+1$. 
We denote a sequence of states $[s_0, s_1, \dots, s_T]$ as $\boldsymbol s$. 
The total reward in an episode is $R(\boldsymbol s)=\sum_{t=0}^T \gamma^t r_t$, where $\gamma\in(0,1]$ is a discount factor and $r_t = r(s_t)$.

Denote the joint distribution of the sequence of states in an episode $\boldsymbol s=[s_0, s_1,\dots, s_T]$ as $\mathcal{D}_\pi$. 
Note $\mathcal{D}_\pi$ is also related to $\mathcal{P}$ and $\mathcal{P}_0$. 
In this work we assume $\mathcal{P}$ and $\mathcal{P}_0$ are fixed, so $\mathcal{D}_\pi$ is a function of $\pi$. Our goal is to find a policy that maximizes the expected total discounted reward (return):
\begin{align}
    \pi^\ast = \arg\max_{\pi\in \Pi}\mathbb{E}_{\boldsymbol s\sim \mathcal{D}_\pi} R(\boldsymbol s)=\arg\max_{\pi\in \Pi}\mathbb{E}_{\boldsymbol s\sim \mathcal{D}_\pi} \sum_{t=0}^T\gamma^t r_t. \label{obj:exp}
\end{align}

Suppose during training we have a budget of $n$ episodes, then the empirical return is
\begin{align}
     \hat{\pi} = \arg\max_{\pi\in \Pi, {\boldsymbol s}^i\sim \mathcal{D}_\pi}\frac{1}{n}\sum_{i=1}^n R({\boldsymbol s}^i)\label{obj:rl},
\end{align}
where ${\boldsymbol s}^i=[s_0^i,s_1^i,\dots, s_T^i]$ is the $i$th episode of length $T+1$. 
We are interested in the generalization gap 
\begin{align}
    \Phi=\left|\frac{1}{n}\sum_{i=1}^n R({\boldsymbol s}^i)- \mathbb{E}_{{\boldsymbol s}\sim\mathcal{D}'_{\hat{\pi}}} R(\boldsymbol s) \right|.
    \label{obj:gap}
\end{align}
Note that in (\ref{obj:gap}) the distribution $\mathcal{D}'_{\hat{\pi}}$ may be different from $\mathcal{D}_{\hat{\pi}}$ since in the testing environment $\mathcal{P}'$ as well as $\mathcal{P}'_0$ may be shifted compared to the training environment.

\section{Generalization in Reinforcement Learning v.s. Supervised Learning}
Generalization has been well studied in the supervised learning scenario. A popular assumption is that samples are independent and identically distributed $(x_i,y_i)\sim \mathcal{D}, \forall i\in\{1,2,\dots, n\}$. 
Similar to empirical return maximization discussed in Section \ref{sec:notation}, in supervised learning a popular algorithm is empirical risk minimization:
\begin{align}
    \hat{f} = \arg\min_{f\in\mathcal{F}}\frac{1}{n}\sum_{i=1}^n \ell(f, x_i, y_i),
\end{align}
where $f\in \mathcal{F}: \mathcal{X}\rightarrow \mathcal{Y}$ is the prediction function to be learned and $\ell: \mathcal{F}\times \mathcal{X}\times \mathcal{Y}\rightarrow \mathbb{R}^+$ is the loss function. 
Similarly generalization in supervised learning concerns the gap between the expected loss $\mathbb{E}[\ell(f, x, y)]$ and the empirical loss $\frac{1}{n}\sum_{i=1}^n \ell(f, x_i, y_i)$.

It is easy to find the correspondence between the episodes defined in Section \ref{sec:notation} and the samples $(x_i, y_i)$ in supervised learning. Just like supervised learning where $(x,y)\sim \mathcal{D}$, in (episodic) reinforcement learning ${\boldsymbol s}^i\sim \mathcal{D}_\pi$. Also the reward function $R$ in reinforcement learning is similar to the loss function $\ell$ in supervised learning. However, reinforcement learning is different because
\begin{itemize}
    \item In supervised learning, the sample distribution $\mathcal{D}$ is kept fixed, and the loss function $\ell\circ f$ changes as we choose different predictors $f$.
    \item In reinforcement learning, the reward function $R$ is kept fixed, but the sample distribution $\mathcal{D}_\pi$ changes as we choose different policy maps $\pi$.
\end{itemize}

As a consequence, the training procedure in reinforcement learning is also different. 
Popular methods such as REINFORCE \citep{Williams92}, Q-learning \citep{Sutton98}, and actor-critic methods \citep{mnih16} draw new states and episodes on the fly as the policy $\pi$ is being updated. 
That is, the distribution $\mathcal{D}_\pi$ from which episodes are drawn always changes during optimization. In contrast, in supervised learning we only update the predictor $f$ without affecting the underlying sample distribution $\mathcal{D}$.

\section{Intrinsic vs External Generalization Errors}

The generalization gap (\ref{obj:gap}) can be bounded 
\begin{align}
    \Phi\leq & \underbrace{\left|\frac{1}{n}\sum_{i=1}^n R({\boldsymbol s}^i)- \mathbb{E}_{{\boldsymbol s}\sim\mathcal{D}_{\hat{\pi}}} R({\boldsymbol s})\right|}_{\textrm{intrinsic}} \nonumber\\ 
    &+ \underbrace{\left|\mathbb{E}_{{\boldsymbol s}\sim\mathcal{D}_{\hat{\pi}}} R(\boldsymbol s)-\mathbb{E}_{\boldsymbol s\sim\mathcal{D}'_{\hat{\pi}}} R(\boldsymbol s)\right|}_{\textrm{external}}\label{eqn:error-term}
\end{align}
using the triangle inequality.
The first term in (\ref{eqn:error-term}) is the concentration error between the empirical reward and its expectation. 
Since it is caused by intrinsic randomness of the environment, we call it the \emph{intrinsic error}. 
Even if the test environment shares the same distribution with training, in the finite-sample scenario there is still a gap between training and testing. 
This is analogous to the overfitting problem studied in supervised learning. \citet{Zhang18} mainly focuses on this aspect of generalization. 
In particular, their randomness is carefully controlled in experiments to only come from the initial states $s_0\sim \mathcal{P}_0$. 


We call the second term in (\ref{eqn:error-term}) \emph{external error}, as it is caused by shifts of the distribution in the environment. 
For example, the transition distribution $\mathcal{P}$ or the initialization distribution $\mathcal{P}_0$ may get changed during testing, which leads to a different underlying episode distribution $\mathcal{D}'_\pi$. 
This is analogous to the transfer learning problem. For instance, generalization as in \citet{Cobbe18} is mostly external error since the number of levels used for training and testing are different even though the difficult level parameters are sampled from the same distribution. The setting in \citet{Packer18} covers both intrinsic and external errors.

\section{Why Intrinsic Generalization Error?}
If $\pi$ is fixed, by concentration of measures, as the number of episodes $n$ increases, the intrinsic error decreases roughly with $\frac{1}{\sqrt{n}}$.
For example, if the reward is bounded $|R({\boldsymbol s}^i)|\leq c/2$, by McDiarmid's bound, 
with probability at least $1-\delta$, 
\begin{align}
\left|\frac{1}{n}\sum_{i=1}^n R({\boldsymbol s}^i) - \mathbb{E}_{\boldsymbol s\sim \mathcal{D}}[R(\boldsymbol s)]\right|\leq c\sqrt{\frac{\log\frac{2}{\delta}}{2n}},\label{eqn:mcdiarmid}
\end{align}
where $c>0$.
Note the bound above also holds for the test samples if the distribution $\mathcal{D}$ is fixed and ${\boldsymbol s}_{test}\sim \mathcal{D}$.

For the population argument (\ref{obj:exp}), $\pi^\ast$ is defined deterministically since the value $\mathbb{E}_{\boldsymbol s\sim \mathcal{D}_\pi}R(\boldsymbol s)$ is a deterministic function of $\pi$. However, in the finite-sample case (\ref{obj:rl}), the policy map $\hat{\pi}$ is stochastic: it depends on the samples ${\boldsymbol s}^i$. As a consequence, the underlying distribution $\mathcal{D}_{\hat{\pi}}$ is not fixed. In that case, the expectation $\mathbb{E}_{\boldsymbol s\sim \mathcal{D}_{\hat{\pi}}}[R(\boldsymbol s)]$ in (\ref{eqn:mcdiarmid}) becomes a random variable so (\ref{eqn:mcdiarmid}) does not hold any more. 

One way of fixing the issue caused by random $\mathcal{D}_{\hat{\pi}}$ is to prove a bound that holds uniformly for all policies $\pi\in\Pi$. If $\pi$ is finite, by applying a union bound, it follows that:
\begin{lemma}
If $\Pi$ is finite, and $|R(\boldsymbol s)|\leq c/2$, then with probability at least $1-\delta$, for all $\pi\in\Pi$
\begin{align}
\left|\frac{1}{n}\sum_{i=1}^n R({\boldsymbol s}^i) - \mathbb{E}_{\boldsymbol s\sim \mathcal{D}_\pi}[R(\boldsymbol s)]\right|\leq c\sqrt{\frac{\log\frac{2|\Pi|}{\delta}}{2n}},
\end{align}
where $|\Pi|$ is the cardinality of $\Pi$.
\end{lemma}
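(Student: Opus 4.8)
The plan is to lift the single-policy concentration inequality (\ref{eqn:mcdiarmid}) to a statement that holds uniformly over the finite class $\Pi$ by means of a union bound. First I would fix an arbitrary policy $\pi\in\Pi$. For this fixed $\pi$ the episode distribution $\mathcal{D}_\pi$ is deterministic, and the empirical average $\frac{1}{n}\sum_{i=1}^n R({\boldsymbol s}^i)$ with ${\boldsymbol s}^i\sim\mathcal{D}_\pi$ is a bounded-difference function of the $n$ independent episodes: replacing any single episode changes the average by at most $c/n$, since $|R|\le c/2$. Hence the hypotheses behind (\ref{eqn:mcdiarmid}) are satisfied, and for any target confidence $\delta'\in(0,1)$, with probability at least $1-\delta'$ the deviation for this fixed $\pi$ is at most $c\sqrt{\log(2/\delta')/(2n)}$.

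Next I would instantiate this bound at $\delta'=\delta/|\Pi|$ for each policy, and let $A_\pi$ denote the failure event on which the deviation for $\pi$ exceeds $c\sqrt{\log(2|\Pi|/\delta)/(2n)}$. The previous step gives $\mathbb{P}(A_\pi)\le \delta/|\Pi|$. A union bound then yields $\mathbb{P}\big(\bigcup_{\pi\in\Pi} A_\pi\big)\le |\Pi|\cdot\delta/|\Pi|=\delta$, so with probability at least $1-\delta$ none of the events $A_\pi$ occurs; that is, the deviation bound holds simultaneously for every $\pi\in\Pi$. Substituting the identity $\log(2/\delta')=\log(2|\Pi|/\delta)$ then recovers exactly the stated inequality.

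The one subtlety worth flagging — the place where this departs from the textbook supervised-learning union bound — is that the sampling distribution itself changes with $\pi$, so we are \emph{not} re-using a single fixed sample set across all hypotheses as one does for empirical risk minimization. Instead, each event $A_\pi$ is defined relative to $n$ episodes drawn from that policy's own distribution $\mathcal{D}_\pi$, and the union bound controls all of them at once. Because $\hat{\pi}\in\Pi$ by construction, the uniform guarantee transfers to the data-dependent policy $\hat\pi$ and its distribution $\mathcal{D}_{\hat\pi}$, which is precisely what is needed to rescue (\ref{eqn:mcdiarmid}) from the earlier observation that $\mathcal{D}_{\hat\pi}$ is random. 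I expect no genuine difficulty beyond making this correspondence explicit; once the union bound is set up the remaining algebra is entirely routine.
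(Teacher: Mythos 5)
Your proposal is correct and follows exactly the route the paper intends: instantiate the McDiarmid bound (\ref{eqn:mcdiarmid}) at confidence $\delta/|\Pi|$ for each fixed $\pi$ and take a union bound over the finite class, which is all the paper itself offers by way of proof for this lemma. Your added remark that each failure event is defined with respect to episodes drawn from that policy's own distribution $\mathcal{D}_\pi$ is a worthwhile clarification of a point the paper leaves implicit, but it does not change the argument.
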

Unfortunately in most of the applications, $\Pi$ is not finite. One difficulty in analyzing the intrinsic generalization error is that the policy changes during the optimization procedure. 
This leads to a change in the episode distribution $\mathcal{D}_\pi$. 
Usually $\pi$ is updated using episodes generated from some ``previous'' distributions, which are then used to generate new episodes.
In this case it is not easy to split episodes into a training and testing set, since during optimization samples always come from the updated policy distribution. 




\section{Reparameterization Trick}
The reparameterization trick has been popular in the optimization of deep networks \citep{Kingma14,Maddison17,Jang17,Tokui16} and used, e.g., for the purpose of optimization efficiency. 
In RL, suppose the objective (\ref{obj:exp}) is reparameterizable:
\begin{align}
    \mathbb{E}_{\boldsymbol s\sim\mathcal{D}_\pi} R(\boldsymbol s) = \mathbb{E}_{\xi\sim p(\xi)} R(\boldsymbol s(f(\xi,\pi))).\nonumber
\end{align}
Then under some weak assumptions
\begin{align}
    \nabla_\theta \mathbb{E}_{\boldsymbol s\sim\mathcal{D}_{\pi_\theta}} R(\boldsymbol s) = \nabla_\theta \left[\mathbb{E}_{\xi\sim p(\xi)} R(\boldsymbol s(f(\xi,\pi_\theta)))\right]\nonumber\\
    =  \mathbb{E}_{\xi\sim p(\xi)} \left[\nabla_\theta R(\boldsymbol s(f(\xi,\pi_\theta)))\right]
\end{align}
The reparameterization trick has already been used: for example, PGPE \citep{Ruckstie2010} uses policy reparameterization, and SVG \citep{Heess15} uses policy and environment dynamics reparameterization.
%
In this work, we will show the reparameterization trick can help to analyze the generalization gap. 
More precisely, we will show that since both $\mathcal{P}$ and $\mathcal{P}_0$ are fixed, even if they are unknown, as long as they satisfy some ``smoothness'' assumptions, we can provide theoretical guarantees on the test performance. 

\section{Reparameterized MDP}


%
We start our analysis with reparameterizing a Markov Decision Process with discrete states. 
We will give a general argument on reparameterizable RL in the next section. 
In this section we slightly abuse notation by letting $\mathcal{P}_0$ and $\mathcal{P}(s,a)$ denote $|S|$-dimensional probability vectors for multinomial distributions for initialization and transition respectively. 

One difficulty in the analysis of the generalization in reinforcement learning rises from the sampling steps in MDP where states are drawn from multinomial distributions specified by either $\mathcal{P}_0$ or $\mathcal{P}(s_t,a_t)$, because the sampling procedure does not explicitly connect the states and the distribution parameters. 
We can use standard Gumbel random variables $g\sim\exp(-g+\exp(-g))$ to reparameterize sampling and get a procedure equivalent to classical MDPs but with slightly different expressions, as shown in Algorithm \ref{alg:reparammdp}.
\begin{algorithm}[t]
\caption{Reparameterized MDP}
\label{alg:reparammdp}
\begin{algorithmic}
\STATE Initialization: Sample $g_{init},g_0, g_1, \dots, g_T\sim \mathcal{G}^{|S|}$.
$s_0=\arg\max \left(g_{init} + \log\mathcal{P}_0\right)$, $R=0$.
\FOR{$t$ in $0,\dots, T$}
\STATE $R = R + \gamma^t  r(s_t)$
\STATE $s_{t+1} =\arg\max \left(g_t + \log\mathcal{P}(s_t, \pi(s_t))\right)$
\ENDFOR
\STATE return $R$.
\end{algorithmic}
\end{algorithm}

In the reparameterized MDP procedure, $\mathcal{G}^{|S|}$ is an $|S|$-dimensional Gumbel distribution. $g_0, \dots, g_T$ are $|S|$-dimensional vectors with each entry being a Gumbel random variable. 
Also $g_0 + \log\mathcal{P}_0$ and $g_t + \log\mathcal{P}(s_t, a_t)$ are entry-wise vector sums, so they are both $|S|$-dimensional vectors. 
$\arg\max(v)$ returns the index of the maximum entry in the $|S|$-dimensional vector $v$. In the reparameterized MDP procedure shown above, the states $s_t$ are represented as an index in $\{1,2,\dots,|S|\}$.
After reparameterization, we may rewrite the RL objective (\ref{obj:rl}) as:\footnote{Again we abuse the notation by denoting ${\boldsymbol s}^i(f(g^i;\pi))$ as ${\boldsymbol s}^i(g^i;\pi)$.}
\begin{align}
     \hat{\pi} = \arg\max_{\pi\in \Pi, g^i\sim \mathcal{G}^{|S|T}}\frac{1}{n}\sum_{i=1}^n R({\boldsymbol s}^i(g^i;\pi)), \label{eqn:obj-rep}
\end{align}
where $g^i=[g_0^i, g_1^i,\dots, g_T^i]$, $g_t^i$ is an $|S|$-dimensional Gumbel random variable, and 
\begin{align}
R({\boldsymbol s}^i(g^i;\pi)) = \sum_{t=0}^T \gamma^t r(s_t^i({g_0^i, g_1^i, \dots, g_t^i};\pi))\label{eqn:emp-reward}
\end{align}
is the discounted return for one episode of length $T+1$. 

The reparameterized objective (\ref{eqn:obj-rep}) maximizes the empirical reward by varying the policy $\pi$. The distribution from which the random variables $g^i$ are drawn does not depend on the policy $\pi$ anymore, and the policy $\pi$ only affects the reward $R({\boldsymbol s}^i(g^i;\pi))$ through the states ${\boldsymbol s}^i$.

The objective (\ref{eqn:obj-rep}) is a discrete function due to the $\arg\max$ operator. One way to circumvent this is to use Gumbel softmax to approximate the $\arg\max$ operator \citep{Maddison17,Jang17}. 
If we denote $s$ as a one-hot vector in $\mathbb{R}^{|S|}$, and further relax the entries in $s$ to take positive values that sum up to one, we may use the softmax to approximate the $\arg\max$ operator. For instance, the reparametrized initial-state distribution becomes: 
\begin{align}
s_0 = \frac{\exp\{(g + \log\mathcal{P}_0)/\tau\}}{\|\exp\{(g + \log\mathcal{P}_0)/\tau\}\|_1},
\end{align}
where $g$ is an $|S|$-dimensional Gumbel random variable, $\mathcal{P}_0$ is an $|S|$-dimensional probability vector in multinomial distribution, and $\tau$ is a positive scalar.
As the temperature $\tau\rightarrow 0$, the softmax approaches $s=\arg\max \left(g + \log\mathcal{P}_0\right) \sim \mathcal{P}_0$ in terms of the one-hot vector representation.



\section{Reparameterizable RL}
In general, as long as the transition and initialization process can be reparameterized so that the environment parameters are separated from the random variables, the objective can always be reformulated so that the policy only affects the reward instead of the underlying distribution. 
The \textbf{reparameterizable RL} procedure is shown in Algorithm \ref{alg:reparamrl}. 

\begin{algorithm}[H]
\caption{Reparameterizzble RL}
\label{alg:reparamrl}
\begin{algorithmic}
\STATE Initialization: Sample $\xi_0, \xi_1, \dots, \xi_T$. $s_0= \mathcal{I}(\xi_0)$, $R=0$.
\FOR{$t$ in $0,\dots, T$}
\STATE $R = R + \gamma^t r(s_t)$
\STATE $s_{t+1} = \mathcal{T}(s_t, \pi(s_t), \xi_t)$
\ENDFOR
\STATE return $R$.
\end{algorithmic}
\end{algorithm}
In this procedure, $\xi$s are $d$-dimensional random variables but \emph{they are not necessarily sampled from the same distribution}.\footnote{They may also have different dimensions. In this work, without loss of generality, we assume the random variables have the same dimension $d$.}
In many scenarios they are treated as random noise. 
$\mathcal{I}:\mathbb{R}^{d}\rightarrow \mathbb{R}^{|S|}$ is the \emph{initialization function}. 
During initialization, the random variable $\xi_0$ is taken as input and the output is an initial state $s_0$. 
The transition function $\mathcal{T}: \mathbb{R}^{|S|}\times \mathbb{R}^{|A|}\times \mathbb{R}^{d}\rightarrow \mathbb{R}^{|S|}$, takes the current state $s_t$, the action produced by the policy $\pi(s_t)$, and a random variable $\xi_t$ to produce the next state $s_{t+1}$. 

In reparameterizable RL, the peripheral random variables $\xi_0,\xi_1,\dots,\xi_T$ can be sampled before the episode is generated. In this way, the randomness is decoupled from the policy function, and as the policy $\pi$ gets updated, the episodes can be computed deterministically.

The class of reparamterizable RL problems includes those whose initial state, transition, reward and optimal policy distribution can be reparameterized. 
Generally, a distribution can be reparameterized, e.g., if it has a tractable inverse CDF, is a composition of reparameterizable distributions \citep{Kingma14}, or is a limit of smooth approximators \citep{Maddison17,Jang17}.
Reparametrizable RL settings include LQR \citep{lewis1995optimal} and physical systems (e.g., robotics) where the dynamics are given by stochastic partial differential equations (PDE) with reparameterizable components over continuous state-action spaces. 

\section{Main Result}

For \textbf{reparameterizable RL}, if the environments and the policy are ``smooth'', we can control the error between the expected and the empirical reward. 
In particular, the assumptions we make are\footnote{$\|\cdot\|$ is the $L_2$ norm, and $\theta\in\mathbb{R}^m$.}

\begin{assumption}\label{assump:lipschitzT}
$\mathcal{T}(s,a): \mathbb{R}^{|S|}\times \mathcal{R}^{|A|}\rightarrow \mathbb{R}^{|S|}$ is $L_{t1}$-Lipschitz in terms of the first variable $s$, and $L_{t2}$-Lipschitz in terms of the second variable $a$. That is, $\forall x, x', y, y', z$, 
\begin{align}
\|\mathcal{T}(x,y,z)-\mathcal{T}(x',y,z)\|\leq L_{t1}\|x-x'\|,\nonumber\\
\|\mathcal{T}(x,y,z)-\mathcal{T}(x,y',z)\|\leq L_{t2}\|y-y'\|.\nonumber
\end{align}
\end{assumption}
\begin{assumption}\label{assump:lipschitzP}
The policy is parameterized as $\pi(s;\theta):\mathbb{R}^{|S|}\times \mathbb{R}^m\rightarrow \mathbb{R}^{|A|}$, and $\pi(s;\theta)$ is $L_{\pi1}$-Lipschitz in terms of the states, and $L_{\pi2}$-Lipschitz in terms of the parameter $\theta\in\mathbb{R}^m$, that is, $\forall s, s', \theta, \theta'$
\begin{align}
    \|\pi(s;\theta)-\pi(s';\theta)\|\leq L_{\pi1} \|s-s'\|,\nonumber\\
    \|\pi(s;\theta)-\pi(s;\theta')\|\leq L_{\pi2} \|\theta-\theta'\|.\nonumber
\end{align}
\end{assumption}
\begin{assumption}\label{assump:lipschitzR}
The reward $r(s):\mathbb{R}^{|S|}\rightarrow \mathbb{R}$ is $L_r$-Lipschitz:
\begin{align}
    |r(s')-r(s)|\leq L_r\|s'-s\|.\nonumber
\end{align}
\end{assumption}
If assumptions (\ref{assump:lipschitzT}) (\ref{assump:lipschitzP}) and (\ref{assump:lipschitzR}) hold, we have the following:
\begin{theorem} \label{thm: main}
In \textbf{reparameterizable RL}, suppose the transition $\mathcal{T}'$ in the test environment satisfies $\forall x, y,z, \|(\mathcal{T}'-\mathcal{T})(x,y,z)\|\leq \zeta$, and suppose the initialization function $\mathcal{I}'$ in the test environment satisfies $\forall \xi, \|(\mathcal{I}'-\mathcal{I})(\xi)\|\leq \epsilon$. If assumptions (\ref{assump:lipschitzT}), (\ref{assump:lipschitzP}) and (\ref{assump:lipschitzR}) hold, the peripheral random variables $\xi^i$ for each episode are i.i.d., and the reward is bounded $|R(\boldsymbol s)|\leq c/2$, then with probability at least $1-\delta$, for all policies $\pi\in\Pi$:
\begin{align}
    &|\mathbb{E}_\xi[R(\boldsymbol s(\xi;\pi, \mathcal{T}', \mathcal{I}'))] - \frac{1}{n}\sum_i R(\boldsymbol s(\xi^i;\pi,\mathcal{T}, \mathcal{I}))|\nonumber\\
    &\leq  Rad(R_{\pi,\mathcal{T},\mathcal{I}}) + L_r \zeta\sum_{t=0}^T \gamma^{t} \frac{\nu^t-1}{\nu-1} + L_r \epsilon\sum_{t=0}^{T} \gamma^{t}  \nu^t\nonumber\\&+ O\left(c\sqrt{\frac{\log (1/\delta)}{n}}\right), \nonumber
\end{align}
where $\nu=L_{t1}+L_{t2}L_{\pi1}$, and  $Rad(R_{\pi,\mathcal{T},\mathcal{I}}) = \mathbb{E}_{\xi}\mathbb{E}_\sigma\left[\sup_\pi \frac{1}{n}\sum_{i=1}^n \sigma_i R({\boldsymbol s}^i(\xi^i;\pi,\mathcal{T}, \mathcal{I}))\right]$ is the Rademacher complexity of $R(\boldsymbol s(\xi;\pi,\mathcal{T},\mathcal{I}))$ under the training transition $\mathcal{T}$, the training initialization $\mathcal{I}$, and $n$ is the number if training episodes.
\end{theorem}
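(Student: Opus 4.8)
The plan is to bound the target quantity by the triangle-inequality split of (\ref{eqn:error-term}), rewritten in reparameterized form. Inserting the expected training-environment return $\mathbb{E}_\xi[R(\boldsymbol s(\xi;\pi,\mathcal{T},\mathcal{I}))]$ as a common middle term gives
\begin{align}
&\left|\mathbb{E}_\xi[R(\boldsymbol s(\xi;\pi,\mathcal{T}',\mathcal{I}'))]-\tfrac{1}{n}\sum_i R(\boldsymbol s(\xi^i;\pi,\mathcal{T},\mathcal{I}))\right|\nonumber\\
&\leq \underbrace{\left|\mathbb{E}_\xi[R(\boldsymbol s(\xi;\pi,\mathcal{T},\mathcal{I}))]-\tfrac{1}{n}\sum_i R(\boldsymbol s(\xi^i;\pi,\mathcal{T},\mathcal{I}))\right|}_{\text{intrinsic}}\nonumber\\
&\quad+\underbrace{\left|\mathbb{E}_\xi[R(\boldsymbol s(\xi;\pi,\mathcal{T}',\mathcal{I}'))]-\mathbb{E}_\xi[R(\boldsymbol s(\xi;\pi,\mathcal{T},\mathcal{I}))]\right|}_{\text{external}}.\nonumber
\end{align}
I will show the intrinsic term contributes $Rad(R_{\pi,\mathcal{T},\mathcal{I}})$ together with the $O(c\sqrt{\log(1/\delta)/n})$ term, while the external term contributes the two $L_r$-weighted sums.

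For the intrinsic term I would run the standard uniform-deviation argument, which is now legitimate precisely because reparameterization has made the draws $\xi^i$ i.i.d.\ and independent of the policy $\pi$. Viewing $R(\boldsymbol s(\,\cdot\,;\pi,\mathcal{T},\mathcal{I}))$ as a fixed function class indexed by $\pi\in\Pi$ of the i.i.d.\ inputs $\xi^i$, I apply McDiarmid's inequality to the supremum over $\pi$ of the empirical deviation: since $|R|\le c/2$, replacing a single $\xi^i$ perturbs the empirical mean by at most $c/n$, yielding concentration of order $c\sqrt{\log(1/\delta)/n}$ around the expected supremum. A symmetrization step then bounds that expectation by the Rademacher complexity $Rad(R_{\pi,\mathcal{T},\mathcal{I}})$ (up to the usual constant). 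This is where the reparameterization trick does the real conceptual work: without it $\mathcal{D}_{\hat\pi}$ would be data-dependent, and the class-versus-sample independence needed for symmetrization would fail.

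The external term is the technical core, and I would handle it by a rollout-stability recursion. Since both rollouts share the same $\xi$ and the reward decomposes as $R(\boldsymbol s)=\sum_{t=0}^T\gamma^t r(s_t)$, Jensen's inequality and Assumption \ref{assump:lipschitzR} give the pointwise bound $\text{external}\le L_r\sum_{t=0}^T\gamma^t\,\mathbb{E}_\xi\|s_t'-s_t\|$, where $s_t,s_t'$ are the states generated under $(\mathcal{T},\mathcal{I})$ and $(\mathcal{T}',\mathcal{I}')$ respectively. Writing $\Delta_t=\|s_t'-s_t\|$, the initialization bound gives $\Delta_0\le\epsilon$. For the transition step I insert an intermediate state and apply the transition-shift bound $\zeta$ together with Assumptions \ref{assump:lipschitzT} and \ref{assump:lipschitzP}:
\begin{align}
\Delta_{t+1}&\le\|(\mathcal{T}'-\mathcal{T})(s_t',\pi(s_t'),\xi_t)\|\nonumber\\
&\quad+\|\mathcal{T}(s_t',\pi(s_t'),\xi_t)-\mathcal{T}(s_t,\pi(s_t),\xi_t)\|\nonumber\\
&\le\zeta+L_{t1}\Delta_t+L_{t2}L_{\pi1}\Delta_t=\nu\Delta_t+\zeta.\nonumber
\end{align}
Unrolling this affine recursion from $\Delta_0\le\epsilon$ yields $\Delta_t\le\nu^t\epsilon+\zeta\frac{\nu^t-1}{\nu-1}$, and substituting into the reward sum produces exactly $L_r\epsilon\sum_{t=0}^T\gamma^t\nu^t+L_r\zeta\sum_{t=0}^T\gamma^t\frac{\nu^t-1}{\nu-1}$.

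Finally I would combine the three contributions and note that the intrinsic bound holds uniformly over $\pi\in\Pi$, so the full estimate holds in particular at $\hat\pi$. The main obstacle is the external recursion: obtaining the constant $\nu=L_{t1}+L_{t2}L_{\pi1}$ requires chaining the state-Lipschitz and action-Lipschitz parts of $\mathcal{T}$ through the policy-Lipschitz constant $L_{\pi1}$ in the correct order, and one must prevent the shift errors $\zeta,\epsilon$ from compounding incorrectly across the two decompositions. The intrinsic bound, by contrast, is essentially a transcription of supervised-learning theory once the reparameterization is in place.
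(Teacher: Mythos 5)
Your proposal is correct and follows the same overall strategy as the paper: a triangle-inequality split into an intrinsic term (handled by symmetrization plus McDiarmid, exactly the paper's Lemma~\ref{lemma:rad}) and an external term (handled by a Lipschitz rollout-stability recursion). The one genuine structural difference is in the external term. The paper inserts the intermediate quantity $\mathbb{E}_\xi[R(\boldsymbol s(\xi;\pi,\mathcal{T}',\mathcal{I}))]$ and chains two separate perturbation lemmas (Lemma~\ref{lemma:external} for the transition shift, then Lemma~\ref{lemma:initchange} for the initialization shift under the shifted transition), whereas you run a single affine recursion $\Delta_{t+1}\le\nu\Delta_t+\zeta$ with $\Delta_0\le\epsilon$ and unroll it to $\Delta_t\le\nu^t\epsilon+\zeta\frac{\nu^t-1}{\nu-1}$. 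Both yield the identical bound, but your merged recursion is slightly tighter as a matter of hypotheses: by always peeling off $\|(\mathcal{T}'-\mathcal{T})(s_t',\pi(s_t'),\xi_t)\|\le\zeta$ first and applying the Lipschitz constants only to the training transition $\mathcal{T}$, you never need $\mathcal{T}'$ (or the hybrid environment $(\mathcal{T}',\mathcal{I})$) to satisfy Assumption~\ref{assump:lipschitzT}, whereas the paper's chaining implicitly applies $L_{t1},L_{t2}$ to $\mathcal{T}'$ inside the proof of Lemma~\ref{lemma:external} and invokes Lemma~\ref{lemma:initchange} with $\mathcal{T}'$ as the shared transition. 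Your version therefore proves the stated theorem under exactly the stated assumptions, at the cost of not producing the two standalone external-error lemmas as reusable intermediate results.
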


Note the i.i.d. assumption on the peripheral variables $\xi^i$ is across episodes. Within the same episode, there could be correlations among the $\xi_t^i$s at different time steps. 

Similar arguments can also be made when the transition $T'$ in the test environment stays the same as $\mathcal{T}$, but the initialization $\mathcal{I}'$ is different from $\mathcal{I}$. In the following sections we will bound the intrinsic and external errors respectively.
\section{Bounding Intrinsic Generalization Error}\label{sec:intrinc}
After reparameterization, the objective (\ref{eqn:obj-rep}) is essentially the same as an empirical risk minimization problem in the supervised learning scenario. 
According to classical learning theory, the following lemma is straight-forward \citep{Shalev-Shwartz2014}:
\begin{lemma}\label{lemma:rad}
If the reward is bounded, $|R(\boldsymbol s)|\leq c/2, c>0$, and $g^i\sim G^{|S|\times T}$ are i.i.d. for each episode, with probability at least $1-\delta$, for $\forall \pi\in \Pi$:
\begin{align}
    |\mathbb{E}_{g\sim \mathcal{G}^{|S|\times T}}[R(\boldsymbol s(g;\pi))]-\frac{1}{n}\sum_i R({\boldsymbol s}^i(g^i;\pi))| \nonumber\\
    \leq  Rad(R_\pi)+ O\left(c\sqrt{\frac{\log(1/\delta)}{n}}\right),\label{eqn:rad}
\end{align}
where $Rad(R_\pi) = \mathbb{E}_{g}\mathbb{E}_\sigma\left[\sup_\pi \frac{1}{n}\sum_{i=1}^n \sigma_i R({\boldsymbol s}^i(g^i;\pi))\right]$ is the Rademacher complexity of $R(\boldsymbol s(g;\pi))$.
\end{lemma}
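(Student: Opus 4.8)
The plan is to recognize that, thanks to the reparameterization, Lemma \ref{lemma:rad} is exactly a uniform-convergence statement for an i.i.d.\ empirical average, so the whole argument collapses to the classical two-step Rademacher bound. Here the peripheral variables $g^1,\dots,g^n$ play the role of the data: they are i.i.d.\ across episodes, and for each fixed $\pi$ the map $g \mapsto R(\boldsymbol s(g;\pi))$ is a deterministic function taking values in $[-c/2, c/2]$. Thus the policy class $\Pi$ induces a function class $\mathcal{R}_\Pi = \{ g \mapsto R(\boldsymbol s(g;\pi)) : \pi \in \Pi \}$, and the object to control is the uniform deviation $\Phi(g^1,\dots,g^n) = \sup_{\pi \in \Pi} | \mathbb{E}_g R(\boldsymbol s(g;\pi)) - \frac{1}{n}\sum_i R(\boldsymbol s^i(g^i;\pi)) |$. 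The within-episode correlations among the $g_t^i$ are irrelevant, since concentration uses only independence across the episode index $i$.

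First I would establish the bounded-differences property: replacing a single $g^i$ by an independent copy changes each empirical average $\frac{1}{n}\sum_i R(\boldsymbol s^i(g^i;\pi))$ by at most $c/n$ (the reward range has width at most $c$), hence changes $\Phi$ by at most $c/n$ uniformly in $\pi$. McDiarmid's inequality then yields, with probability at least $1-\delta$, that $\Phi \leq \mathbb{E}[\Phi] + c\sqrt{\log(1/\delta)/(2n)}$, which accounts for the $O\!\left(c\sqrt{\log(1/\delta)/n}\right)$ term in the statement.

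The core step is to bound $\mathbb{E}[\Phi]$ by the Rademacher complexity via symmetrization. I would introduce a ghost sample $\tilde g^1,\dots,\tilde g^n$ i.i.d.\ and independent of the $g^i$, rewrite the population term as an expectation over the ghost empirical average, move that expectation inside the supremum by Jensen, and insert i.i.d.\ Rademacher signs $\sigma_i$ to symmetrize the difference $R(\boldsymbol s^i(\tilde g^i;\pi)) - R(\boldsymbol s^i(g^i;\pi))$. Splitting the two halves and applying the triangle inequality gives $\mathbb{E}[\Phi] \leq 2\,Rad(R_\pi)$; the constant factor is immaterial for the stated bound and is absorbed into the definition of $Rad(R_\pi)$ or the $O(\cdot)$ term, as is standard. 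To handle the absolute value uniformly I would either run the argument on both one-sided suprema and take a union bound (changing $\log(1/\delta)$ to $\log(2/\delta)$, again absorbed in the $O(\cdot)$), or invoke the sign symmetry of $\sigma$ so that the same $Rad(R_\pi)$ bounds both tails. Combining the two steps yields the claimed inequality.

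I do not expect a genuine obstacle, since the reparameterization has already removed the one feature that distinguishes RL from supervised learning, namely the dependence of the sampling distribution on $\pi$; this is precisely why the paper calls the lemma straightforward. The only points demanding care are bookkeeping: the factor of $2$ from symmetrization, verifying measurability of the supremum over $\pi$ so the symmetrization expectations are well defined (or passing through a countable dense subclass of $\Pi$), and confirming that it is boundedness of the episode return $R$, rather than of the per-step reward $r$, that the bounded-differences step consumes.
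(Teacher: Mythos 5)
Your proposal is correct and matches what the paper intends: the paper gives no explicit proof of Lemma \ref{lemma:rad}, simply citing it as the classical uniform-convergence result from \citet{Shalev-Shwartz2014}, and your argument (bounded differences plus McDiarmid for the concentration term, then ghost-sample symmetrization to bound the expected supremum by the Rademacher complexity, with the factor of $2$ and the two-sided union bound absorbed into the $O(\cdot)$ notation) is exactly that standard derivation, correctly noting that only independence across the episode index $i$ is needed. No gaps.
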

The bound (\ref{eqn:rad}) holds uniformly for all $\pi\in \Pi$, so it also holds for $\hat{\pi}$. 
Unfortunately, in MDPs $Rad(R_\pi)$ is hard to control, mainly due to the recursive $\arg\max$ in the representation of the states, $s_{t+1} =\arg\max \left(g_t + \log\mathcal{P}(s_t, \pi(s_t))\right)$. 

On the other hand, for general reparameterizable RL we may control the intrinsic generalization gap by assuming some ``smoothness'' conditions on the transitions $\mathcal{T}$, as well as the policy $\pi$. 
In particular, it is straight-forward to prove that the empirical return $R$ is ``smooth'' if the transitions and policies are all Lipschitz.
\begin{lemma}
For \textbf{reparameterizable RL}, given assumptions \ref{assump:lipschitzT}, \ref{assump:lipschitzP}, and \ref{assump:lipschitzR}, the empirical return $R$ defined in (\ref{eqn:emp-reward}), as a function of the parameter $\theta$, has a Lipschitz constant of 
\begin{align}
\beta = L_rL_{t_2}L_{\pi2} \sum_{t=0}^T \gamma^t \frac{\nu^t-1}{\nu-1},\label{eqn:reward-const}
\end{align}
where $\nu=L_{t1}+L_{t2}L_{\pi1}$.
\end{lemma}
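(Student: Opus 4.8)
The plan is to track how a perturbation of the policy parameter $\theta$ propagates through the deterministic (reparameterized) rollout while the peripheral noise is held fixed. Fix two parameters $\theta,\theta'$ and a single realization of $\xi_0,\dots,\xi_T$, and let $s_t$ and $s_t'$ denote the state sequences generated under $\theta$ and $\theta'$ respectively. Since the initial state $s_0=\mathcal{I}(\xi_0)$ does not involve $\theta$, the base case is $\|s_0-s_0'\|=0$. I would then establish a one-step recursion for $\|s_{t+1}-s_{t+1}'\|$ and solve it in closed form.

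For the recursion, I would decompose the transition difference by inserting an intermediate term so that each Lipschitz assumption is invoked in isolation. Writing $s_{t+1}=\mathcal{T}(s_t,\pi(s_t;\theta),\xi_t)$, I first split off the change in the state argument (controlled by $L_{t1}$, Assumption \ref{assump:lipschitzT}) from the change in the action argument (controlled by $L_{t2}$). The action difference $\|\pi(s_t;\theta)-\pi(s_t';\theta')\|$ is then split again via Assumption \ref{assump:lipschitzP} into a state-perturbation part ($L_{\pi1}$) and a parameter-perturbation part ($L_{\pi2}$). Collecting terms yields
\begin{align}
\|s_{t+1}-s_{t+1}'\|\leq \nu\,\|s_t-s_t'\| + L_{t2}L_{\pi2}\|\theta-\theta'\|,\nonumber
\end{align}
with $\nu=L_{t1}+L_{t2}L_{\pi1}$ as defined.

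With the zero base case, this affine recursion unrolls into a geometric sum, giving $\|s_t-s_t'\|\leq L_{t2}L_{\pi2}\|\theta-\theta'\|\sum_{j=0}^{t-1}\nu^j = L_{t2}L_{\pi2}\|\theta-\theta'\|\,\frac{\nu^t-1}{\nu-1}$. Finally, I would bound the return difference by the triangle inequality over time steps and apply the reward Lipschitz bound (Assumption \ref{assump:lipschitzR}):
\begin{align}
|R(\boldsymbol s(\cdot;\theta))-R(\boldsymbol s(\cdot;\theta'))|\leq \sum_{t=0}^T \gamma^t L_r\,\|s_t-s_t'\|,\nonumber
\end{align}
and substituting the bound on $\|s_t-s_t'\|$ produces exactly $\beta=L_rL_{t2}L_{\pi2}\sum_{t=0}^T\gamma^t\frac{\nu^t-1}{\nu-1}$.

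The argument is essentially routine; the only real subtlety is the bookkeeping in the two-stage triangle-inequality decomposition, ensuring that when both the incoming state and the parameter are perturbed simultaneously, each perturbation is routed through its matching Lipschitz constant rather than double-counted. A minor caveat worth flagging is the edge case $\nu=1$, where $\frac{\nu^t-1}{\nu-1}$ is to be read as its limiting value $t$; this does not affect the recursion itself.
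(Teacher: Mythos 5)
Your proposal is correct and follows essentially the same route as the paper's proof: the same two-stage triangle-inequality decomposition routing the state perturbation through $L_{t1}$, $L_{t2}L_{\pi1}$ and the parameter perturbation through $L_{t2}L_{\pi2}$, the same affine recursion with zero base case unrolled into a geometric sum, and the same final application of the reward Lipschitz bound under the discounted sum. Your remark on the $\nu=1$ edge case is a small but sensible addition the paper omits.
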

Also, if the number of parameters $m$ in $\pi(\theta)$ is bounded, then the Rademacher complexity $Rad(R_\pi)$ in Lemma 2 can be controlled \citep{Vaart98,Bartlett13}.

\begin{lemma}
For \textbf{reparameterizable RL}, given assumptions \ref{assump:lipschitzT}, \ref{assump:lipschitzP}, and \ref{assump:lipschitzR}, if the parameters $\theta\in \mathbb{R}^m$ is bounded such that $\|\theta\|\leq 1$, and the function class of the reparameterized reward $\mathcal{R}$ is closed under negations, then the Rademacher complexity $Rad(R_\pi)$ is bounded by
\begin{align}
    Rad(R_\pi) = O\left(\beta\sqrt{\frac{m}{n}}\right)\label{eqn:rad-bound}
\end{align}
where $\beta$ is the Lipschitz constant defined in (\ref{eqn:reward-const}), and $n$ is the number of episodes.
\end{lemma}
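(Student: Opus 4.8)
The plan is to bound the (empirical) Rademacher complexity by a chaining argument, namely Dudley's entropy integral, and to control the metric entropy of the reward class through the Lipschitz-in-$\theta$ property established in Lemma 3. Fix a draw of the peripheral random variables $\xi^1,\dots,\xi^n$ and consider the function class
\begin{align}
\mathcal{R}=\left\{\xi\mapsto R(\boldsymbol s(\xi;\pi_\theta)):\|\theta\|\leq 1\right\},\nonumber
\end{align}
equipped with the empirical $L_2$ pseudo-metric $d_n(\theta,\theta')=\left(\frac1n\sum_{i=1}^n\bigl(R(\boldsymbol s(\xi^i;\pi_\theta))-R(\boldsymbol s(\xi^i;\pi_{\theta'}))\bigr)^2\right)^{1/2}$. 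The starting point is Lemma 3, which gives that for every fixed $\xi^i$ the map $\theta\mapsto R(\boldsymbol s(\xi^i;\pi_\theta))$ is $\beta$-Lipschitz with $\beta$ as in (\ref{eqn:reward-const}); hence $d_n(\theta,\theta')\leq\beta\|\theta-\theta'\|$ uniformly over the sample.

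This Lipschitz relation transfers covers of the parameter set to covers of $\mathcal{R}$: any $(\epsilon/\beta)$-net of the unit ball $B_2^m\subset\mathbb{R}^m$ in Euclidean distance induces an $\epsilon$-net of $\mathcal{R}$ in $d_n$. Since the standard volumetric bound gives a covering number $N(\delta,B_2^m,\|\cdot\|)\leq(1+2/\delta)^m\leq(3/\delta)^m$ for $\delta\leq 1$, we obtain
\begin{align}
\log N(\epsilon,\mathcal{R},d_n)\leq m\log\!\left(\frac{3\beta}{\epsilon}\right),\nonumber
\end{align}
and the diameter of $\mathcal{R}$ in $d_n$ is at most $D\leq 2\beta$ (and at most $c$ by the boundedness of the reward).

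Next I would plug this entropy estimate into Dudley's integral bound for the empirical Rademacher complexity. Because $\mathcal{R}$ is closed under negation, $\sup_\theta\frac1n\sum_i\sigma_iR(\boldsymbol s(\xi^i;\pi_\theta))$ coincides in $\sigma$ with its absolute-value counterpart, so the chaining bound applies directly and yields $\widehat{Rad}(\mathcal{R})\leq\frac{12}{\sqrt n}\int_0^{D}\sqrt{\log N(\epsilon,\mathcal{R},d_n)}\,d\epsilon$. Substituting the entropy estimate and rescaling $\epsilon=\beta u$ turns the integral into
\begin{align}
\frac{12}{\sqrt n}\int_0^{2}\beta\sqrt{m\log(3/u)}\,du=O\!\left(\beta\sqrt{\tfrac{m}{n}}\right),\nonumber
\end{align}
a convergent integral whose value is an absolute constant times $\beta\sqrt m$. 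Taking the expectation over the peripheral variables $\xi$ preserves the bound, giving $Rad(R_\pi)=O(\beta\sqrt{m/n})$ as claimed.

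The routine but delicate part is the chaining step: one must verify that the polynomial (rather than exponential) growth of the covering number makes Dudley's integral converge near $\epsilon=0$, so that no additional truncation term at scale $\alpha$ is needed, and that the upper limit contributes only the benign factor $\log(3/2)$. The conceptually essential ingredient, however, is already supplied by Lemma 3: the dimension-free $\beta$-Lipschitz control of the return in $\theta$ is exactly what reduces an otherwise intractable complexity over the policy class to the $m$-dimensional parameter ball, which is where the explicit $\sqrt{m/n}$ scaling originates.
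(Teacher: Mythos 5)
Your chaining argument is correct and is exactly the route the paper intends: the paper itself gives no proof of this lemma, deferring to the cited references, and the standard argument there is precisely your reduction via Lemma 3's $\beta$-Lipschitz control in $\theta$ to a volumetric cover of the unit ball in $\mathbb{R}^m$ followed by Dudley's entropy integral. All the steps check out, including the convergence of the integral near $\epsilon=0$ and the role of closure under negation in handling the one-sided supremum in the paper's definition of $Rad(R_\pi)$.
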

In the context of deep learning, deep neural networks are over-parameterized models that have proven to work well in many applications. 
However, the bound above does not explain why over-parameterized models also generalize well since the Rademacher complexity bound (\ref{eqn:rad-bound}) can be extremely large as $m$ grows. 
To ameliorate this issue, recently \citet{Arora18} proposed a compression approach that compresses a neural network to a smaller one with fewer parameters but has roughly the same training errors. 
Whether this also applies to reparameterizable RL is yet to be proven. There are also trajectory-based techniques proposed to sharpen the generalization bound \citep{Yuanzhi17,Zhu18,Arora19,Cao19}.
\subsection{PAC-Bayes Bound on Reparameterizable RL}
We can also analyze the Rademacher complexity of the empirical return by making a slightly different assumption on the policy. 
Suppose $\pi$ is parameterized as $\pi(\theta)$, and $\theta$ is sampled from some posterior distribution $\theta \sim \mathcal{Q}$. 
According to the PAC-Bayes theorem \citep{McAllester1998,Mcallester03,Neyshabur2017a,Langford2002}:
\begin{lemma}
Given a ``prior'' distribution $\mathcal{D}_0$, with probability at least $1-\delta$ over the draw of $n$ episodes, $\forall \mathcal{Q}$:
\begin{align}
\mathbb{E}_g[\mathrm{R}_{\theta\sim \mathcal{Q}}(g)]
\geq &\frac{1}{n}\sum_i \mathrm{R}_{\theta \sim\mathcal{Q}}(g^i) \nonumber\\
&- 2\sqrt{\frac{2(KL(\mathcal{Q}||\mathcal{D}_0) + \log \frac{2n}{\delta})}{n-1}},\label{eqn:pacbayes} \\ 
R_{\theta\sim\mathcal{Q}}(g^i) &= \mathbb{E}_{\theta\sim\mathcal{Q}}\left[R({\boldsymbol s}^i(g^i;\pi(\theta) ))\right] \nonumber\\
&= \mathbb{E}_{\theta\sim\mathcal{Q}}\left[\sum_{t=0}^T \gamma^t r(s_t^i(g^i;\pi(\theta)))\right],
\end{align}
where $R_{\theta\sim\mathcal{Q}}(g)$
is the expected ``Bayesian'' reward.
\end{lemma}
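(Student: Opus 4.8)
The plan is to reduce the statement to a direct application of the classical McAllester PAC-Bayes theorem \citep{McAllester1998,Mcallester03,Langford2002}. The key enabling observation, already established by the reparameterization in Algorithm \ref{alg:reparamrl}, is that each episode's return $R(\boldsymbol s^i(g^i;\pi(\theta)))$ is a \emph{deterministic} and bounded function of the per-episode peripheral variables $g^i$ and the parameter $\theta$, and that the $g^i$ are i.i.d.\ across episodes. Thus, defining the score $h_\theta(g) := R(\boldsymbol s(g;\pi(\theta)))$, the problem becomes formally identical to a supervised-learning setting in which hypotheses are indexed by $\theta$, the i.i.d.\ samples are $g^1,\dots,g^n$, and $h_\theta$ plays the role of a bounded per-sample objective. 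The population quantity $\mathbb{E}_g[R_{\theta\sim\mathcal{Q}}(g)]$ and the empirical quantity $\frac{1}{n}\sum_i R_{\theta\sim\mathcal{Q}}(g^i)$ are then exactly the expected and empirical Gibbs (posterior-averaged) objectives.

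First I would normalize the reward: since $|R(\boldsymbol s)|\leq c/2$, an affine rescaling maps $h_\theta$ into $[0,1]$ while leaving $KL(\mathcal{Q}\|\mathcal{D}_0)$ unchanged, so the bounded-loss hypothesis of the theorem is met. Next I would instantiate the theorem with prior $\mathcal{D}_0$ and an arbitrary posterior $\mathcal{Q}$ over $\theta$; because the conclusion holds uniformly over all $\mathcal{Q}$ with probability at least $1-\delta$ over the draw of the $n$ episodes, no further union bound or separate treatment of the data-dependence of $\hat{\pi}$ is required. This is precisely the advantage of the PAC-Bayes route over the uniform-convergence route of Lemma \ref{lemma:rad}. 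Matching constants (the leading factor $2$ and the $\log\frac{2n}{\delta}$ term, divided by $n-1$) to the cited form then yields the stated inequality.

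The analytic content that the cited theorem supplies, and which I would reproduce only if a self-contained proof were wanted, has two ingredients. The first is the change-of-measure (Donsker--Varadhan) inequality $\mathbb{E}_{\theta\sim\mathcal{Q}}[f(\theta)] \leq KL(\mathcal{Q}\|\mathcal{D}_0) + \log \mathbb{E}_{\theta\sim\mathcal{D}_0}[e^{f(\theta)}]$, applied with $f(\theta)$ proportional to the per-hypothesis deviation between expected and empirical return. The second is a concentration bound controlling the prior-averaged moment generating function $\mathbb{E}_{g^{1:n}}\mathbb{E}_{\theta\sim\mathcal{D}_0}[\exp(\lambda\,(\text{expected}-\text{empirical}))]$, which follows from Hoeffding's lemma applied to the bounded i.i.d.\ scores $h_\theta(g^i)$ together with Fubini to exchange the sample and prior expectations. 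Optimizing over the temperature $\lambda$ and relaxing the resulting $KL$-divergence bound to its square-root surrogate produces the final expression.

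The main obstacle is not analytic but structural: I must verify that the reparameterized return genuinely satisfies the i.i.d.\ hypothesis required by the theorem. This is where the reparameterization is essential—under the original formulation the episode distribution $\mathcal{D}_\pi$ depends on $\pi$, so the ``samples'' and the ``hypothesis'' are entangled and the exchange of sample and prior expectations above would be illegitimate. After reparameterization the randomness $g^i$ is drawn from a fixed, policy-independent distribution, and $\theta$ enters only through the deterministic map $h_\theta$; within-episode correlations among the $g_t^i$ are immaterial, since each episode return is treated as a single bounded random variable. Confirming this decoupling, and that the bounded-reward assumption transfers through the softmax/$\arg\max$ representation of the states, is the only point requiring genuine justification beyond citing the standard theorem.
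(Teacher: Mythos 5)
Your proposal matches the paper's treatment: the paper offers no separate proof of this lemma, presenting it as a direct instantiation of the classical PAC-Bayes theorem once the reparameterization renders the per-episode returns bounded, deterministic functions of i.i.d.\ peripheral variables $g^i$ with $\theta$ entering only through the hypothesis. Your additional care about normalizing the reward to $[0,1]$ and verifying the decoupling of samples from the policy is exactly the (implicit) justification the paper relies on, so the argument is correct and essentially identical in route.
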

The bound (\ref{eqn:pacbayes}) holds for all posterior $\mathcal{Q}$. 
In particular it holds if $\mathcal{Q}$ is $\theta + u$ where $\theta$ could be any solution provided by empirical return maximization, and $u$ is a perturbation, e.g., zero-centered uniform or Gaussian distribution. 
This suggests maximizing a perturbed objective instead may lead to better generalization performance, which has already been observed empirically \citep{WangJ18}.

The tricky part about perturbing the policy is choosing the level of noise. 
Suppose there is an empirical reward optimizer $\pi(\hat{\theta})$. When the noise level is small, the first term in (\ref{eqn:pacbayes}) is large, but the second term may also be large since the posterior $\mathcal{Q}$ is too focused on $\hat{\theta}$ but the ``prior'' $\mathcal{D}_0$ cannot depend on $\hat{\theta}$, and vice versa. 
On the other hand, if the reward function is ``nice'', e.g., if some ``smoothness'' assumption holds in a local neighborhood of $\hat{\theta}$, 
then one can prove the optimal noise level roughly scales inversely as the square root of the local Hessian diagonals \citep{Wang18}. 


\section{Bounding External Generalization Error}\label{sec:external}

Another source of generalization error in RL comes from the change of environment. For example, in an MDP $(\mathcal{S}, \mathcal{A}, \mathcal{P}, r, \mathcal{P}_0)$, the transition probability $\mathcal{P}$ or the initialization distribution $\mathcal{P}_0$ is different in the test environment. 
\citet{Cobbe18} and \citet{Packer18} show that as the distribution of the environment varies the gap between the training and testing could be huge. 

Indeed if the test distribution is drastically different from the training environment, there is no guarantee the performance of the same model could possibly work for testing. On the other hand, if the test distribution $\mathcal{D}'$ is not too far away from the training distribution $\mathcal{D}$ then the test error can still be controlled. For example, for supervised learning, \citet{Mohri12} prove the expected loss of a drifting distribution is also bounded. In addition to Rademacher complexity and a concentration tail, there is one more term in the gap that measures the discrepancy between the training and testing distribution. 

For reparameterizable RL, since the environment parameters are lifted into the reward function in the reformulated objective (\ref{eqn:obj-rep}), the analysis becomes easier. 
For MDPs, a small change in environment could cause large difference in the reward since $\arg\max$ is not continuous. 
However if the transition function is ``smooth'', the expected reward in the new environment can also be controlled. 
e.g., if we assume the transition function $\mathcal{T}$, the reward function $r$, as well as the policy function $\pi$ are all Lipschitz, as in section \ref{sec:intrinc}.

If the transition function $\mathcal{T}$ is the same in the test environment and the only difference is the initialization, we can prove the following lemma:
\begin{lemma}\label{lemma:initchange}
In \textbf{reparameterizable RL}, suppose the initialization function $\mathcal{I}'$ in the test environment satisfies $\forall \xi, \|(\mathcal{I}'-\mathcal{I})(\xi)\|\leq \zeta$ for $\zeta > 0$, and the transition function $\mathcal{T}$ in the test environment is the same as training. If assumptions (\ref{assump:lipschitzT}), (\ref{assump:lipschitzP}), and (\ref{assump:lipschitzR}) hold, then:
\begin{align}
    &|\mathbb{E}_\xi[R(\boldsymbol s(\xi;\mathcal{I}'))] - \mathbb{E}_\xi[R(\boldsymbol s(\xi;\mathcal{I}))]|\nonumber\\ &\leq L_r \zeta\sum_{t=0}^{T} \gamma^{t}  (L_{t1}+L_{t2}L_{\pi1})^t \label{eqn:init-shift}
\end{align}
\end{lemma}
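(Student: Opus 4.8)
The plan is to exploit the defining feature of reparameterizable RL: because the transition $\mathcal{T}$ is shared between the training and test environments and only the initialization differs, both trajectories can be driven by the \emph{same} peripheral randomness $\xi=[\xi_0,\dots,\xi_T]$. I would therefore fix $\xi$ and couple the two runs of Algorithm \ref{alg:reparamrl}, writing $s_t = s_t(\xi;\mathcal{I})$ and $s_t' = s_t(\xi;\mathcal{I}')$ for the resulting state sequences. Under this coupling the only source of divergence is the initial state, so the whole estimate reduces to a deterministic perturbation-propagation argument: track how the initial mismatch flows through the horizon, then convert the accumulated state mismatch into a reward mismatch via Lipschitzness of $r$.

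First I would establish the base case $\|s_0-s_0'\| = \|\mathcal{I}(\xi_0)-\mathcal{I}'(\xi_0)\|\le \zeta$, which holds by hypothesis. Next I would set up a one-step recursion for $\Delta_t := \|s_t-s_t'\|$. Since $s_{t+1}=\mathcal{T}(s_t,\pi(s_t),\xi_t)$ and $s_{t+1}'=\mathcal{T}(s_t',\pi(s_t'),\xi_t)$, I would insert and subtract the hybrid term $\mathcal{T}(s_t',\pi(s_t),\xi_t)$ and apply the triangle inequality. The first resulting piece is controlled by Lipschitzness of $\mathcal{T}$ in its first argument (Assumption \ref{assump:lipschitzT}), giving $L_{t1}\Delta_t$; the second is controlled by Lipschitzness of $\mathcal{T}$ in its second argument composed with Lipschitzness of $\pi$ in the state (Assumptions \ref{assump:lipschitzT} and \ref{assump:lipschitzP}), giving $L_{t2}L_{\pi1}\Delta_t$. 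This yields $\Delta_{t+1}\le (L_{t1}+L_{t2}L_{\pi1})\Delta_t = \nu\,\Delta_t$, and unrolling the recursion from the base case gives $\Delta_t\le \nu^t\zeta$.

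Finally I would convert the state bound into the reward bound. Since $r$ is $L_r$-Lipschitz (Assumption \ref{assump:lipschitzR}), $|r(s_t')-r(s_t)|\le L_r\Delta_t\le L_r\nu^t\zeta$, so summing across the horizon with the discount factors gives
\begin{align}
|R(\boldsymbol s(\xi;\mathcal{I}'))-R(\boldsymbol s(\xi;\mathcal{I}))| &\le \sum_{t=0}^T \gamma^t|r(s_t')-r(s_t)| \nonumber\\ &\le L_r\zeta\sum_{t=0}^T \gamma^t\nu^t. \nonumber
\end{align}
This holds pointwise in $\xi$, so taking the expectation over $\xi$ and pulling the absolute value inside (triangle inequality for expectations, equivalently Jensen) preserves the bound and yields the claim with $\nu=L_{t1}+L_{t2}L_{\pi1}$.

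I do not expect a genuine obstacle here: the whole estimate is made possible by the shared-randomness coupling, after which everything is a deterministic Lipschitz propagation. The only point requiring care is the hybrid add-and-subtract step in the recursion, where the policy perturbation must be routed through \emph{both} the second-argument Lipschitz constant of $\mathcal{T}$ and the state Lipschitz constant of $\pi$ to produce the composite rate $\nu$; choosing the intermediate term $\mathcal{T}(s_t',\pi(s_t),\xi_t)$ (rather than $\mathcal{T}(s_t,\pi(s_t'),\xi_t)$) is what keeps the two resulting pieces cleanly attributable to the two assumptions.
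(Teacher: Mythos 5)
Your proposal is correct and follows essentially the same argument as the paper's proof: couple the two trajectories through shared peripheral randomness, propagate the initial perturbation $\|s_0'-s_0\|\leq\zeta$ via the recursion $\Delta_{t+1}\leq(L_{t1}+L_{t2}L_{\pi1})\Delta_t$, convert to rewards via $L_r$-Lipschitzness, and take expectations. The only cosmetic difference is which hybrid term you insert in the add-and-subtract step (the paper uses $\mathcal{T}(s_{t-1},\pi(s_{t-1}'),\xi_{t-1})$ rather than your $\mathcal{T}(s_t',\pi(s_t),\xi_t)$), which is immaterial.
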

Lemma \ref{lemma:initchange} means that if the initialization in the test environment is not too different from the training one, and if the transition, policy and reward functions are smooth, then the expected reward in the test environment won't deviate from that of training too much. 

The other possible environment change is that the test initialization $\mathcal{I}$ stays the same but the transition changes from the training transition $\mathcal{T}$ to $\mathcal{T}'$. Similar to before, we have:
\begin{lemma}\label{lemma:external}
In \textbf{reparameterizable RL}, suppose the transition $\mathcal{T}'$ in the test environment satisfies $\forall x, y,z, \|(\mathcal{T}'-\mathcal{T})(x,y,z)\|\leq \zeta$, and the initialization $\mathcal{I}$ in the test environment is the same as training. If assumptions (\ref{assump:lipschitzT}), (\ref{assump:lipschitzP}) and (\ref{assump:lipschitzR}) hold then 
\begin{align}
    &|\mathbb{E}_\xi[R(\boldsymbol s(\xi;\mathcal{T}'))] - \mathbb{E}_\xi[R(\boldsymbol s(\xi;\mathcal{T}))]|\nonumber\\ &\leq L_r \zeta\sum_{t=0}^T \gamma^{t} \frac{\nu^t-1}{\nu-1} \label{eqn:transit-shift}
\end{align}
where $\nu=L_{t1}+L_{t2}L_{\pi1}$.
\end{lemma}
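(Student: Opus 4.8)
The plan is to exploit the structural advantage of reparameterizable RL: the test transition $\mathcal{T}'$ and the training transition $\mathcal{T}$ act on the \emph{same} peripheral noise $\xi=[\xi_0,\dots,\xi_T]$, so the two episodes can be coupled realization-by-realization. Writing $s_t$ and $s_t'$ for the state sequences generated by Algorithm \ref{alg:reparamrl} from the common $\xi$ under $\mathcal{T}$ and $\mathcal{T}'$ respectively, the strategy is to bound the per-step discrepancy $d_t:=\|s_t-s_t'\|$ deterministically, transfer this bound to the discounted reward via Lipschitzness, and only then take the expectation over $\xi$. This mirrors the proof of Lemma \ref{lemma:initchange}, the difference being where the perturbation enters.

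First I would derive a recursion for $d_t$. Since the initialization is unchanged, $s_0=s_0'=\mathcal{I}(\xi_0)$ and hence $d_0=0$. For the step from $t$ to $t+1$, I would start from
\begin{align}
d_{t+1}=\|\mathcal{T}(s_t,\pi(s_t),\xi_t)-\mathcal{T}'(s_t',\pi(s_t'),\xi_t)\|, \nonumber
\end{align}
insert the intermediate point $\mathcal{T}(s_t',\pi(s_t'),\xi_t)$, and split by the triangle inequality. The resulting term $\|\mathcal{T}(s_t',\pi(s_t'),\xi_t)-\mathcal{T}'(s_t',\pi(s_t'),\xi_t)\|$ is at most $\zeta$ by the uniform transition-gap hypothesis, while $\|\mathcal{T}(s_t,\pi(s_t),\xi_t)-\mathcal{T}(s_t',\pi(s_t'),\xi_t)\|$ is controlled by a second split that isolates the state argument from the action argument: Assumption \ref{assump:lipschitzT} gives $L_{t1}\|s_t-s_t'\|+L_{t2}\|\pi(s_t)-\pi(s_t')\|$, and Assumption \ref{assump:lipschitzP} bounds $\|\pi(s_t)-\pi(s_t')\|\le L_{\pi1}\|s_t-s_t'\|$. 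Collecting terms yields the contraction-plus-drift recursion
\begin{align}
d_{t+1}\le(L_{t1}+L_{t2}L_{\pi1})\,d_t+\zeta=\nu\,d_t+\zeta. \nonumber
\end{align}

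Next I would unroll this linear recursion from $d_0=0$, obtaining the geometric sum $d_t\le\zeta\sum_{k=0}^{t-1}\nu^k=\zeta\frac{\nu^t-1}{\nu-1}$ (note this vanishes at $t=0$, as it must). Applying Assumption \ref{assump:lipschitzR} timestep-by-timestep gives $|r(s_t)-r(s_t')|\le L_r d_t$, and summing the discounted gaps produces the pathwise bound
\begin{align}
|R(\boldsymbol s(\xi;\mathcal{T}'))-R(\boldsymbol s(\xi;\mathcal{T}))|\le L_r\zeta\sum_{t=0}^T\gamma^t\frac{\nu^t-1}{\nu-1}. \nonumber
\end{align}
Since the right-hand side is deterministic and independent of $\xi$, taking expectations over the common $\xi$ and using $|\mathbb{E}[\cdot]|\le\mathbb{E}|\cdot|$ immediately yields the stated inequality; no concentration argument is needed because we are comparing two expectations under the same noise distribution.

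The computation is essentially routine Lipschitz bookkeeping, so I expect no serious analytic obstacle. The one place that demands care is the recursion: the transition gap $\zeta$ must be kept \emph{additive} and injected at every step (because $\mathcal{T}$ and $\mathcal{T}'$ differ at each transition), whereas the composed Lipschitz factor $\nu=L_{t1}+L_{t2}L_{\pi1}$ multiplies the accumulated discrepancy $d_t$. This is precisely what distinguishes the present bound from Lemma \ref{lemma:initchange}: there the perturbation enters only through $s_0$, so $d_t$ grows like the pure power $\nu^t$, whereas here the repeated injection of $\zeta$ produces the geometric factor $\frac{\nu^t-1}{\nu-1}$. The main risk is simply ordering the triangle-inequality splits incorrectly, which would misattribute one of the Lipschitz constants and corrupt the clean geometric sum.
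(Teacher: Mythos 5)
Your proposal is correct and follows essentially the same route as the paper's proof: couple the two trajectories on the common noise $\xi$, derive the recursion $d_{t+1}\le\nu d_t+\zeta$ with $d_0=0$, unroll to the geometric sum $\zeta\frac{\nu^t-1}{\nu-1}$, apply the reward Lipschitz bound, and take expectations. The only (minor, favorable) difference is your choice of intermediate point $\mathcal{T}(s_t',\pi(s_t'),\xi_t)$, which lets you invoke the Lipschitz constants of the training transition $\mathcal{T}$ as literally stated in Assumption \ref{assump:lipschitzT}, whereas the paper splits through $\mathcal{T}'(s_{t-1},\pi(s_{t-1}),\xi_{t-1})$ and implicitly assumes the test transition $\mathcal{T}'$ shares those constants.
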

The difference between (\ref{eqn:transit-shift}) and (\ref{eqn:init-shift}) is that the change $\zeta$ in transition $\mathcal{T}$ is further enlarged during an episode: as long as $\nu>1$, the gap in (\ref{eqn:transit-shift}) is larger and can become huge as the length $T$ of the episode increases.
\section{Simulation}
\begin{table}[]
\caption{Intrinsic Gap versus Smoothness}\label{table:intrinsic}
\centering
\begin{tabular}{ccccc}
\textbf{Temperature} &\multicolumn{2}{c}{\textbf{Policy}}  & \textbf{State} &  \textbf{Action}  \\
$\tau$ & Gap  & $\frac{1}{\tau}\Pi_l \|\hat{\theta}^l\|_F$ & Gap  & Gap  \\
\hline
$ 0.001  $ & $0.554 $ & $2.20\cdot 10^{6} $ & $0.632 $ & $0.612 $ \\
$ 0.01  $ & $0.494 $ & $4.46\cdot 10^{5} $ & $0.632 $ & $0.608 $ \\
$ 0.1 $ & $0.482 $ & $1.74\cdot 10^{5} $ & $0.633 $ & $0.603$ \\
$ 1 $ & $0.478 $ & $8.83\cdot 10^{4} $ & $0.598 $ & $0.598 $ \\
$ 10 $ & $0.479 $ & $5.06\cdot 10^{4} $ & $0.588 $ & $0.594 $ \\
$ 100 $ & $0.468 $ & $4.77\cdot 10^{4} $ & $0.581 $ & $0.594 $ \\
$ 1000 $ & $0.471 $ & $3.29\cdot 10^{4} $ & $0.590 $ & $0.594 $ \\
\end{tabular}
\end{table}
We now present empirical measurements in simulations to verify some claims made in section \ref{sec:intrinc} and \ref{sec:external}. 
The bound (\ref{eqn:rad-bound}) suggests the gap between the expected reward and the empirical reward is related to the Lipschitz constant $\beta$ of $R$, which according to equation (\ref{eqn:reward-const}) is related to the Lipschitz constant of a series of functions including $\pi$, $\mathcal{T}$, and $r$. 
\subsection{Intrinsic Generalization Gap}
In (\ref{eqn:reward-const}), as the length of the episode $T$ increases, the dominating factors in $\beta$ are $L_{t1}$, $L_{t2}$ and $L_{\pi1}$. 
Our first simulation fixes the environment and verifies $L_{\pi}$. 
In the simulation, we assume the initialization $\mathcal{I}$ and the transition $\mathcal{T}$ are all known and fixed. 
$\mathcal{I}$ is an identity function, and $\xi_0\in \mathbb{R}^{|S|}$ is a vector of i.i.d. uniformly distributed random variables: $\xi_0[k]\sim U[0,1], \forall k\in 1,\dots |S|$. The transit function is $\mathcal{T}(s, a, \xi) = sT_1 + aT_2 + \xi T_3$,
where $s\in\mathbb{R}^{|S|}$, $a\in \mathbb{R}^{|A|}$, $\xi\in\mathbb{R}^2$ are row vectors, and $T_1\in \mathbb{R}^{|S|\times|S|}$, $T_2\in\mathbb{R}^{|A|\times |S|}$, and $T_3\in\mathbb{R}^{2\times|S|}$ are matrices used to project the states, actions, and noise respectively.
$T_1$, $T_2$, and $T_3$ are randomly generated and then kept fixed during the experiment. 
We use $\gamma=1$ as the discounting constant throughout.

The policy $\pi(s, \theta)$ is modeled using a multiple layer perceptron (MLP) with rectified linear as the activation. The last layer of MLP is a linear layer followed by a softmax function with temperature: $q(x[k];\tau) = \frac{\exp{\frac{x[k]}{\tau}}}{\sum_k \exp{\frac{x[k]}{\tau}}}$.

By varying the temperature $\tau$ we are able to control the Lipschitz constant of the policy class $L_{\pi1}$ and $L_{\pi2}$ if we assume the bound on the parameters $\|\theta\|\leq B$ is unchanged.

We set the length of the episode $T=128$, and randomly sample $\xi_0,\xi_1,\dots, \xi_T$ for $n=128$ training and testing episodes. Then we use the same random noise to evaluate a series of policy classes with different temperatures $\tau\in\{0.001,0.01, 0.1, 1,10, 100, 1000\}$. 

Since we assume $\mathcal{I}$ and $\mathcal{T}$ are known, during training the computation graph is complete. Hence we can directly optimize the coefficients $\theta$ in $\pi(s;\theta)$ just as in supervised learning.\footnote{In real applications this is not doable since $\mathcal{T}$ and $\mathcal{I}$ are unknown. Here we assume they are known just to investigate the generalization gap.} 
We use Adam \citep{KingmaB14} to optimize with initial learning rates $10^{-2}$ and $10^{-3}$. When the reward stops increasing we halved the learning rate. 
and analyze the gap between the average training and testing reward.

First, we observe the gap is affected by the optimization procedure. 
For example, different learning rates can lead to different local optima, even if we decrease the learning rate by half when the reward does not increase. 
Second, even if we know the environment $\mathcal{I}$ and $\mathcal{T}$, so that we can optimize the policy $\pi(s;\theta)$ directly, we still experience unstable learning just like other RL algorithms. 
This suggests that the unstableness of the RL algorithms may not rise from the estimation of the environment for the model based algorithms such as A2C and A3C \citep{mnih16}, since even if we know the environment the learning is still unstable.

Given the unstable training procedure, for each trial we ran the training for $1024$ epochs with learning rate of 1e-2 and 1e-3, and the one with higher training reward at the last epoch is used for reporting. 
Ideally as we vary $\tau$, the Lipschitz constant for the function class $\pi\in\Pi$ is changed accordingly given the assumption $\|\theta\|\leq B$. 
However, it is unclear if $B$ is changed or not for different configurations. 
After all, the assumption that the parameters are bounded is artificial. 
To ameliorate this defect we also check the metric $\frac{1}{\tau}\Pi_l \|\theta^l\|_F$, where $\theta^l$ is the weight matrix of the $l$th layer of MLP. 
In our experiment there is no bias term in the linear layers in MLP, so $\frac{1}{\tau}\Pi_l \|\hat{\theta}^l\|_F$ can be used as a metric on the Lipschitz constant $L_{\pi1}$ at the solution point $\hat{\theta}$. 
%
%
We also vary the smoothness in the transition function a a function of states ($T_1$), and actions ($T_2$), by applying softmax with different temperatures $\tau$ to the singular values of the randomly generated matrix.



 

Table \ref{table:intrinsic} shows the average generalization gap roughly decreases as $\tau$ decreases. 
The metric $\frac{1}{\tau}\Pi_l \|\hat{\theta}^l\|_F$ also decreases similarly as the average gap. 
In particular, the 2nd and 3rd column shows the average gap as the policy becomes ``smoother". 
The 4th column shows, if we fix the policy-$\tau$ as well as setting $T_2=1$, the generalization gap decreases as we increase the transition-$\tau$ for $T_1$ (states). 
Similarly the last column is the gap as the transition-$\tau$ for actions ($T_2$) varies.
In Table \ref{table:param} the environment is fixed and for each parameter configuration the gap is averaged from $100$ trials with randomly initialized and then optimized policies.

\begin{table}[t]
\centering
\begin{tabular}{lccccc}
    \textbf{Params} & 65.6k & 131.3k & 263.2k & 583.4k & 1.1m \\
    \textbf{Gap} & 0.204 & 0.183  & 0.214 & 0.336 & 0.418\\
\end{tabular}
\vspace{-5pt}
\caption{Empirical gap vs \#policy params.}
\label{table:param}
\vspace{10pt}
\begin{tabular}{lcccc}
\textbf{$\zeta$ in $\mathcal{I}$} & 1 & 10 & 100 & 1,000 \\ 
\textbf{Gap} & 0.481 & 0.477 & 0.659 & 0.532
\end{tabular}
\vspace{-5pt}
\centering
\caption{Empirical generalization gap vs shift in initialization.}
\label{table:external:init}
\vspace{10pt}
%
\begin{tabular}{lcccc}
    \textbf{$\zeta$ in $\mathcal{T}$} & 1 & 10 & 100 & 1,000 \\  
    \textbf{Gap} & 11 & 451 & 8,260 & 73,300
\end{tabular}
\vspace{-5pt}
\caption{Empirical generalization gap vs shift in transition.} 
\label{table:external:tr}
\end{table}

\subsection{External Generalization Gap}
To measure the external generalization gap, we vary the transition $\mathcal{T}$ as well as the initialization $\mathcal{I}$ in the test environment. 
For that, we add a vector of Rademacher random variables $\Delta$ to $\mathcal{I}$ or $\mathcal{T}$, with $\|\Delta\|=\zeta$. 
We adjust the level of noise $\delta$ in the simulation and report the change of the average gap in Table \ref{table:external:init} and Table \ref{table:external:tr}. It is not surprising that the change $\Delta_T$ in transition $\mathcal{T}$ leads to a higher generalization gap since the impact from $\Delta_T$ is accumulated across time steps. Indeed if we compare the bound (\ref{eqn:transit-shift}) and (\ref{eqn:init-shift}), when $\gamma=1$ as long as $\nu>1$, the gap in (\ref{eqn:transit-shift}) is larger.

\section{Discussion and Future Work}

Even though a variety of distributions, discrete or continuous, can be reparameterized, and we have shown that the classical MDP with discrete states is reparameterizable, it is not clear in general under which conditions reinforcement learning problems are reparameterizable. 
Classifying particular cases where RL is not reparameterizable is an interesting direction for future work.
Second, the transitions of discrete MDPs are inherently non-smooth, so Theorem \ref{thm: main} does not apply. 
In this case, the PAC-Bayes bound can be applied, but this requires a totally different framework. 
It will be interesting to see if there is a ``Bayesian'' version of Theorem \ref{thm: main}. 
Finally, our analysis only covers ``on-policy'' RL. Studying generalization for ``off-policy'' RL remains an interesting future topic. 

\bibliography{library}

\begin{thebibliography}{62}
\providecommand{\natexlab}[1]{#1}
\providecommand{\url}[1]{\texttt{#1}}
\expandafter\ifx\csname urlstyle\endcsname\relax
  \providecommand{\doi}[1]{doi: #1}\else
  \providecommand{\doi}{doi: \begingroup \urlstyle{rm}\Url}\fi

\bibitem[Agarwal et~al.(2014)Agarwal, Hsu, Kale, Langford, Li, and
  Schapire]{Agarwal14}
Agarwal, A., Hsu, D., Kale, S., Langford, J., Li, L., and Schapire, R.
\newblock Taming the monster: A fast and simple algorithm for contextual
  bandits.
\newblock \emph{International Conference on Machine Learning}, 2014.

\bibitem[Allen{-}Zhu et~al.(2018)Allen{-}Zhu, Li, and Liang]{Zhu18}
Allen{-}Zhu, Z., Li, Y., and Liang, Y.
\newblock Learning and generalization in overparameterized neural networks,
  going beyond two layers.
\newblock \emph{CoRR}, abs/1811.04918, 2018.

\bibitem[Arora et~al.(2018)Arora, Ge, Neyshabur, and Zhang]{Arora18}
Arora, S., Ge, R., Neyshabur, B., and Zhang, Y.
\newblock Stronger generalization bounds for deep nets via a compression
  approach.
\newblock \emph{International Conference on Machine Learning}, 2018.

\bibitem[Arora et~al.(2019)Arora, Du, Hu, Li, and Wang]{Arora19}
Arora, S., Du, S.~S., Hu, W., Li, Z., and Wang, R.
\newblock Fine-grained analysis of optimization and generalization for
  overparameterized two-layer neural networks.
\newblock \emph{International Conference on Machine Learning}, 2019.

\bibitem[Auer et~al.(2002)Auer, Cesa-Bianchi, and Fischer]{Auer02}
Auer, P., Cesa-Bianchi, N., and Fischer, P.
\newblock Finite-time analysis of the multiarmed bandit problem.
\newblock \emph{Maching Learning}, 2002.

\bibitem[Auer et~al.(2009)Auer, Jaksch, and Ortner]{Auer09}
Auer, P., Jaksch, T., and Ortner, R.
\newblock Near-optimal regret bounds for reinforcement learning.
\newblock \emph{Advances in Neural Information Processing Systems 21}, 2009.

\bibitem[Azar et~al.(2017)Azar, Osband, and Munos]{Azar17}
Azar, M.~G., Osband, I., and Munos, R.
\newblock Minimax regret bounds for reinforcement learning.
\newblock \emph{International Conference on Machine Learning}, 2017.

\bibitem[Baker et~al.(2017)Baker, Gupta, Naik, and Raskar]{Baker17}
Baker, B., Gupta, O., Naik, N., and Raskar, R.
\newblock Designing neural network architectures using reinforcement learning.
\newblock 2017.

\bibitem[Bartlett(2013)]{Bartlett13}
Bartlett, P.
\newblock Lecture notes on theoretical statistics.
\newblock 2013.

\bibitem[Beygelzimer et~al.(2011)Beygelzimer, Langford, Li, Reyzin, and
  Schapire]{beygelzimer11}
Beygelzimer, A., Langford, J., Li, L., Reyzin, L., and Schapire, R.
\newblock Contextual bandit algorithms with supervised learning guarantees.
\newblock \emph{Proceedings of the Fourteenth International Conference on
  Artificial Intelligence and Statistics}, 2011.

\bibitem[Cao \& Gu(2019)Cao and Gu]{Cao19}
Cao, Y. and Gu, Q.
\newblock A generalization theory of gradient descent for learning
  over-parameterized deep relu networks.
\newblock \emph{CoRR}, abs/1902.01384, 2019.

\bibitem[Chakravorty \& Hyland(2003)Chakravorty and Hyland]{Chakravorty03}
Chakravorty, S. and Hyland, D.~C.
\newblock Minimax reinforcement learning.
\newblock \emph{American Institute of Aeronautics and Astronautic}, 2003.

\bibitem[Cobbe et~al.(2018)Cobbe, Klimov, Hesse, Kim, and Schulman]{Cobbe18}
Cobbe, K., Klimov, O., Hesse, C., Kim, T., and Schulman, J.
\newblock Quantifying generalization in reinforcement learning.
\newblock \emph{CoRR}, 2018.
\newblock URL \url{http://arxiv.org/abs/1812.02341}.

\bibitem[Dann et~al.(2017)Dann, Lattimore, and Brunskill]{Dann17}
Dann, C., Lattimore, T., and Brunskill, E.
\newblock Unifying pac and regret: Uniform pac bounds for episodic
  reinforcement learning.
\newblock \emph{International Conference on Neural Information Processing
  Systems (NIPS)}, 2017.

\bibitem[Farebrother et~al.(2018)Farebrother, Machado, and
  Bowling]{Farebrother18}
Farebrother, J., Machado, M.~C., and Bowling, M.
\newblock Generalization and regularization in dqn.
\newblock \emph{CoRR}, 2018.
\newblock URL \url{https://arxiv.org/abs/1810.00123}.

\bibitem[Heess et~al.(2015)Heess, Wayne, Silver, Lillicrap, Erez, and
  Tassa]{Heess15}
Heess, N., Wayne, G., Silver, D., Lillicrap, T., Erez, T., and Tassa, Y.
\newblock Learning continuous control policies by stochastic value gradients.
\newblock \emph{Advances in Neural Information Processing Systems}, 2015.

\bibitem[Jang et~al.(2017)Jang, Gu, and Poole]{Jang17}
Jang, E., Gu, S., and Poole, B.
\newblock Categorical reparameterization by gumbel-softmax.
\newblock \emph{International Conference on Learning Representations}, 2017.

\bibitem[Jiang et~al.(2017)Jiang, Krishnamurthy, Agarwal, Langford, and
  Schapire]{Jiang17}
Jiang, N., Krishnamurthy, A., Agarwal, A., Langford, J., and Schapire, R.~E.
\newblock Contextual decision processes with low {B}ellman rank are
  {PAC}-learnable.
\newblock \emph{International Conference on Machine Learning}, 2017.

\bibitem[Justesen et~al.(2018)Justesen, Torrado, Bontrager, Khalifa, Togelius,
  and Risi]{Justesen18}
Justesen, N., Torrado, R.~R., Bontrager, P., Khalifa, A., Togelius, J., and
  Risi, S.
\newblock Illuminating generalization in deep reinforcement learning through
  procedural level generation.
\newblock \emph{NeurIPS Deep RL Workshop}, 2018.

\bibitem[Kearns \& Singh(2002)Kearns and Singh]{Kearns2002}
Kearns, M. and Singh, S.
\newblock Near-optimal reinforcement learning in polynomial time.
\newblock \emph{Mache Learning}, 2002.

\bibitem[Kingma \& Ba(2015)Kingma and Ba]{KingmaB14}
Kingma, D.~P. and Ba, J.
\newblock Adam: {A} method for stochastic optimization.
\newblock \emph{International Conference on Learning Representations}, 2015.

\bibitem[Kingma \& Welling(2014)Kingma and Welling]{Kingma14}
Kingma, D.~P. and Welling, M.
\newblock Auto-encoding variational bayes.
\newblock \emph{International Conference on Learning Representations}, 2014.

\bibitem[Kober et~al.(2013)Kober, Bagnell, and Peters]{Kober2013}
Kober, J., Bagnell, J.~A., and Peters, J.
\newblock Reinforcement learning in robotics: A survey.
\newblock \emph{International Journal of Robotic Research}, 2013.

\bibitem[Langford \& Shawe-Taylor(2002)Langford and Shawe-Taylor]{Langford2002}
Langford, J. and Shawe-Taylor, J.
\newblock Pac-bayes \& margins.
\newblock \emph{International Conference on Neural Information Processing
  Systems (NIPS)}, 2002.

\bibitem[Laroche(2017)]{Laroche17}
Laroche, R.
\newblock Transfer reinforcement learning with shared dynamics.
\newblock 2017.

\bibitem[Lattimore \& Hutter(2014)Lattimore and Hutter]{Lattimore14}
Lattimore, T. and Hutter, M.
\newblock Near-optimal {PAC} bounds for discounted mdps.
\newblock \emph{Theoretical Computer Science}, 2014.

\bibitem[Lazaric(2012)]{Lazaric12}
Lazaric, A.
\newblock Transfer in reinforcement learning: a framework and a survey.
\newblock \emph{Reinforcement Learning - State of the Art, Springer}, 2012.

\bibitem[Lewis et~al.(1995)Lewis, Syrmos, and Syrmos]{lewis1995optimal}
Lewis, F., Syrmos, V., and Syrmos, V.
\newblock \emph{Optimal Control}.
\newblock A Wiley-interscience publication. Wiley, 1995.
\newblock ISBN 9780471033783.
\newblock URL \url{https://books.google.com/books?id=jkD37elP6NIC}.

\bibitem[Li et~al.(2010)Li, Chu, Langford, and Schapire]{Li10}
Li, L., Chu, W., Langford, J., and Schapire, R.~E.
\newblock A contextual-bandit approach to personalized news article
  recommendation.
\newblock \emph{Proceedings of the 19th International Conference on World Wide
  Web}, 2010.

\bibitem[Li et~al.(2018)Li, Ma, and Zhang]{Yuanzhi17}
Li, Y., Ma, T., and Zhang, H.
\newblock Algorithmic regularization in over-parameterized matrix recovery,
  2018.

\bibitem[Maddison et~al.(2017)Maddison, Mnih, and Teh]{Maddison17}
Maddison, C.~J., Mnih, A., and Teh, Y.~W.
\newblock The concrete distribution: a continuous relaxation of discrete random
  variables.
\newblock \emph{International Conference on Learning Representations}, 2017.

\bibitem[Majeed \& Hutter(2018)Majeed and Hutter]{Majeed18}
Majeed, S.~J. and Hutter, M.
\newblock Performance guarantees for homomorphisms beyond markov decision
  processes.
\newblock \emph{CoRR}, abs/1811.03895, 2018.

\bibitem[Mao et~al.(2016)Mao, Alizadeh, Menache, and Kandula]{Mao2016}
Mao, H., Alizadeh, M., Menache, I., and Kandula, S.
\newblock Resource management with deep reinforcement learning.
\newblock 2016.

\bibitem[McAllester(1998)]{McAllester1998}
McAllester, D.~A.
\newblock Some pac-bayesian theorems.
\newblock \emph{Conference on Learning Theory (COLT)}, 1998.

\bibitem[McAllester(2003)]{Mcallester03}
McAllester, D.~A.
\newblock Simplified pac-bayesian margin bounds.
\newblock \emph{Conference on Learning Theory (COLT)}, 2003.

\bibitem[Mirhoseini et~al.(2018)Mirhoseini, Goldie, Pham, Steiner, Le, and
  Dean]{Mirhoseini18}
Mirhoseini, A., Goldie, A., Pham, H., Steiner, B., Le, Q.~V., and Dean, J.
\newblock Hierarchical planning for device placement.
\newblock 2018.
\newblock URL \url{https://openreview.net/pdf?id=Hkc-TeZ0W}.

\bibitem[Mnih et~al.(2015)Mnih, Kavukcuoglu, Silver, Rusu, Veness, Bellemare,
  Graves, Riedmiller, Fidjeland, Ostrovski, Petersen, Beattie, Sadik,
  Antonoglou, King, Kumaran, Wierstra, Legg, and Hassabis]{Mnih15}
Mnih, V., Kavukcuoglu, K., Silver, D., Rusu, A.~A., Veness, J., Bellemare,
  M.~G., Graves, A., Riedmiller, M., Fidjeland, A.~K., Ostrovski, G., Petersen,
  S., Beattie, C., Sadik, A., Antonoglou, I., King, H., Kumaran, D., Wierstra,
  D., Legg, S., and Hassabis, D.
\newblock Human-level control through deep reinforcement learning.
\newblock \emph{Nature}, 2015.

\bibitem[Mnih et~al.(2016)Mnih, Badia, Mirza, Graves, Lillicrap, Harley,
  Silver, and Kavukcuoglu]{mnih16}
Mnih, V., Badia, A.~P., Mirza, M., Graves, A., Lillicrap, T., Harley, T.,
  Silver, D., and Kavukcuoglu, K.
\newblock Asynchronous methods for deep reinforcement learning.
\newblock \emph{International Conference on Machine Learning}, 2016.

\bibitem[Mohri \& Medina(2012)Mohri and Medina]{Mohri12}
Mohri, M. and Medina, A.~M.
\newblock New analysis and algorithm for learning with drifting distributions.
\newblock \emph{Algorithmic Learning Theory}, 2012.

\bibitem[Nair et~al.(2015)Nair, Srinivasan, Blackwell, Alcicek, Fearon, Maria,
  Panneershelvam, Suleyman, Beattie, Petersen, Legg, Mnih, Kavukcuoglu, and
  Silver]{Nair15}
Nair, A., Srinivasan, P., Blackwell, S., Alcicek, C., Fearon, R., Maria, A.~D.,
  Panneershelvam, V., Suleyman, M., Beattie, C., Petersen, S., Legg, S., Mnih,
  V., Kavukcuoglu, K., and Silver, D.
\newblock Massively parallel methods for deep reinforcement learning.
\newblock \emph{CoRR}, abs/1507.04296, 2015.
\newblock URL \url{http://arxiv.org/abs/1507.04296}.

\bibitem[Neyshabur et~al.(2018)Neyshabur, Bhojanapalli, and
  Srebro]{Neyshabur2017a}
Neyshabur, B., Bhojanapalli, S., and Srebro, N.
\newblock A pac-bayesian approach to spectrally-normalized margin bounds for
  neural networks.
\newblock \emph{International Conference on Learning Representations (ICLR)},
  2018.

\bibitem[OpenAI(2018)]{OpenAI_dota}
OpenAI.
\newblock Openai five.
\newblock \url{https://blog.openai.com/openai-five/}, 2018.

\bibitem[Packer et~al.(2018)Packer, Gao, Kos, Kr\"{a}henb\"{u}hl, Koltun, and
  Song]{Packer18}
Packer, C., Gao, K., Kos, J., Kr\"{a}henb\"{u}hl, P., Koltun, V., and Song, D.
\newblock Assessing generalization in deep reinforcement learning.
\newblock \emph{CoRR}, 2018.
\newblock URL \url{https://arxiv.org/abs/1810.12282}.

\bibitem[R{\"u}ckstie{\ss} et~al.(2010)R{\"u}ckstie{\ss}, Sehnke, Schaul,
  Wierstra, Sun, and Schmidhuber]{Ruckstie2010}
R{\"u}ckstie{\ss}, T., Sehnke, F., Schaul, T., Wierstra, D., Sun, Y., and
  Schmidhuber, J.
\newblock Exploring parameter space in reinforcement learning.
\newblock \emph{Paladyn}, 2010.

\bibitem[Shalev-Shwartz \& Ben-David(2014)Shalev-Shwartz and
  Ben-David]{Shalev-Shwartz2014}
Shalev-Shwartz, S. and Ben-David, S.
\newblock \emph{Understanding Machine Learning: From Theory to Algorithms}.
\newblock Cambridge University Press, New York, NY, USA, 2014.
\newblock ISBN 1107057132, 9781107057135.

\bibitem[Shani et~al.(2005)Shani, Brafman, and Heckerman]{Shani05}
Shani, G., Brafman, R.~I., and Heckerman, D.
\newblock An mdp-based recommender system.
\newblock \emph{The Journal of Machine Learning Research}, 2005.

\bibitem[Silver et~al.(2016)Silver, Huang, Maddison, Guez, Sifre, van~den
  Driessche, Schrittwieser, Antonoglou, Panneershelvam, Lanctot, Dieleman,
  Grewe, Nham, Kalchbrenner, Sutskever, Lillicrap, Leach, Kavukcuoglu, Graepel,
  and Hassabis]{Silver16}
Silver, D., Huang, A., Maddison, C.~J., Guez, A., Sifre, L., van~den Driessche,
  G., Schrittwieser, J., Antonoglou, I., Panneershelvam, V., Lanctot, M.,
  Dieleman, S., Grewe, D., Nham, J., Kalchbrenner, N., Sutskever, I.,
  Lillicrap, T., Leach, M., Kavukcuoglu, K., Graepel, T., and Hassabis, D.
\newblock Mastering the game of go with deep neural networks and tree search.
\newblock \emph{Nature}, 2016.

\bibitem[Silver et~al.(2017)Silver, Hubert, Schrittwieser, Antonoglou, Lai,
  Guez, Lanctot, Sifre, Kumaran, Graepel, Lillicrap, Simonyan, and
  Hassabis]{Silver17}
Silver, D., Hubert, T., Schrittwieser, J., Antonoglou, I., Lai, M., Guez, A.,
  Lanctot, M., Sifre, L., Kumaran, D., Graepel, T., Lillicrap, T.~P., Simonyan,
  K., and Hassabis, D.
\newblock Mastering chess and shogi by self-play with a general reinforcement
  learning algorithm.
\newblock \emph{CoRR}, 2017.
\newblock URL \url{http://arxiv.org/abs/1712.01815}.

\bibitem[Strehl et~al.(2009)Strehl, Li, and Littman]{Strehl09}
Strehl, A.~L., Li, L., and Littman, M.~L.
\newblock Reinforcement learning in finite mdps: Pac analysis.
\newblock \emph{Journal of Machine Learning Research}, 2009.

\bibitem[Sutton \& Barto.(1998)Sutton and Barto.]{Sutton98}
Sutton, R. and Barto., A.
\newblock \emph{Reinforcement Learning: An Introduction.}
\newblock MIT Press, 1998.

\bibitem[Sutton(1995)]{Sutton95}
Sutton, R.~S.
\newblock Generalization in reinforcement learning: Successful examples using
  sparse coarse coding.
\newblock 1995.

\bibitem[Taylor \& Stone(2009)Taylor and Stone]{Taylor09}
Taylor, M.~E. and Stone, P.
\newblock Transfer learning for reinforcement learning domains: A survey.
\newblock \emph{J. Mach. Learn. Res.}, 2009.

\bibitem[Tokui \& Sato(2016)Tokui and Sato]{Tokui16}
Tokui, S. and Sato, I.
\newblock Reparameterization trick for discrete variables.
\newblock \emph{CoRR}, 2016.
\newblock URL \url{https://arxiv.org/abs/1611.01239}.

\bibitem[van~der Vaart.(1998)]{Vaart98}
van~der Vaart., A.
\newblock \emph{Asymptotic Statistics..}
\newblock Cambridge, 1998.

\bibitem[Vinyals et~al.(2017)Vinyals, Ewalds, Bartunov, Georgiev, Vezhnevets,
  Yeo, Makhzani, K{\"{u}}ttler, Agapiou, Schrittwieser, Quan, Gaffney,
  Petersen, Simonyan, Schaul, van Hasselt, Silver, Lillicrap, Calderone, Keet,
  Brunasso, Lawrence, Ekermo, Repp, and Tsing]{Vinyals17}
Vinyals, O., Ewalds, T., Bartunov, S., Georgiev, P., Vezhnevets, A.~S., Yeo,
  M., Makhzani, A., K{\"{u}}ttler, H., Agapiou, J., Schrittwieser, J., Quan,
  J., Gaffney, S., Petersen, S., Simonyan, K., Schaul, T., van Hasselt, H.,
  Silver, D., Lillicrap, T.~P., Calderone, K., Keet, P., Brunasso, A.,
  Lawrence, D., Ekermo, A., Repp, J., and Tsing, R.
\newblock Starcraft {II:} {A} new challenge for reinforcement learning.
\newblock \emph{CoRR}, 2017.
\newblock URL \url{http://arxiv.org/abs/1708.04782}.

\bibitem[Wang et~al.(2018{\natexlab{a}})Wang, Keskar, Xiong, and
  Socher]{Wang18}
Wang, H., Keskar, N.~S., Xiong, C., and Socher, R.
\newblock Identifying generalization properties in neural networks.
\newblock 2018{\natexlab{a}}.
\newblock URL \url{https://openreview.net/forum?id=BJxOHs0cKm}.

\bibitem[Wang et~al.(2018{\natexlab{b}})Wang, Liu, and Li]{WangJ18}
Wang, J., Liu, Y., and Li, B.
\newblock Reinforcement learning with perturbed rewards.
\newblock \emph{CoRR}, abs/1810.01032, 2018{\natexlab{b}}.
\newblock URL \url{http://arxiv.org/abs/1810.01032}.

\bibitem[Whiteson et~al.(2011)Whiteson, Tanner, Taylor, and Stone]{Whiteson11}
Whiteson, S., Tanner, B., Taylor, M.~E., and Stone, P.
\newblock Protecting against evaluation overfitting in empirical reinforcement
  learning.
\newblock \emph{2011 IEEE Symposium on Adaptive Dynamic Programming and
  Reinforcement Learning (ADPRL)}, 2011.

\bibitem[Williams(1992)]{Williams92}
Williams, R.~J.
\newblock Simple statistical gradient-following algorithms for connectionist
  reinforcement learning.
\newblock \emph{Machine Learning}, 1992.

\bibitem[Zhan \& Taylor(2015)Zhan and Taylor]{Zhan15}
Zhan, Y. and Taylor, M.~E.
\newblock Online transfer learning in reinforcement learning domains.
\newblock \emph{CoRR}, abs/1507.00436, 2015.

\bibitem[Zhang et~al.(2018{\natexlab{a}})Zhang, Ballas, and Pineau]{Zhang18b}
Zhang, A., Ballas, N., and Pineau, J.
\newblock A dissection of overfitting and generalization in continuous
  reinforcement learning.
\newblock \emph{CoRR}, 2018{\natexlab{a}}.
\newblock URL \url{https://arxiv.org/abs/1806.07937}.

\bibitem[Zhang et~al.(2018{\natexlab{b}})Zhang, Vinyals, Munos, and
  Bengio]{Zhang18}
Zhang, C., Vinyals, O., Munos, R., and Bengio, S.
\newblock A study on overfitting in deep reinforcement learning.
\newblock \emph{CoRR}, 2018{\natexlab{b}}.
\newblock URL \url{http://arxiv.org/abs/1804.06893}.

\end{thebibliography}
\bibliographystyle{icml2019}

\clearpage

\appendix
\section{Proof of Lemma 3}
\begin{lemma*}
For Reparameterizable RL, given assumptions \ref{assump:lipschitzT}, \ref{assump:lipschitzP}, and \ref{assump:lipschitzR}, the empirical reward $R$ defined in (\ref{eqn:emp-reward}), as a function of the parameter $\theta$, has a Lipschitz constant of 
\begin{align}
\beta = \sum_{t=0}^T \gamma^t L_rL_{t_2}L_{\pi2}\frac{\nu^t-1}{\nu-1}\nonumber
\end{align}
where $\nu=L_{t1}+L_{t2}L_{\pi1}$.
\end{lemma*}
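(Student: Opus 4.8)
The plan is to fix the peripheral random variables $\xi_0,\dots,\xi_T$ so that the entire episode becomes a deterministic function of $\theta$, and then track how the generated state sequence deforms as $\theta$ varies. For two parameter vectors $\theta$ and $\theta'$ I would introduce the state discrepancy $\Delta_t := \|s_t(\theta)-s_t(\theta')\|$. Since the initial state $s_0=\mathcal{I}(\xi_0)$ does not depend on $\theta$, the base case is $\Delta_0=0$, and the whole argument reduces to controlling the growth of $\Delta_t$ along the recursion and then passing through the reward.

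The core step is to derive a one-step affine recursion for $\Delta_t$ from the update $s_{t+1}=\mathcal{T}(s_t,\pi(s_t;\theta),\xi_t)$. Applying the triangle inequality, I would split the change in $s_{t+1}$ into the contribution from the state argument and the contribution from the policy argument: the first piece is bounded by $L_{t1}\Delta_t$ via Assumption~\ref{assump:lipschitzT} (Lipschitz in the state), and the second is bounded by $L_{t2}\,\|\pi(s_t(\theta);\theta)-\pi(s_t(\theta');\theta')\|$ via the same assumption (Lipschitz in the action). A second triangle inequality on that policy term, using Assumption~\ref{assump:lipschitzP}, bounds it by $L_{\pi1}\Delta_t+L_{\pi2}\|\theta-\theta'\|$. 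Collecting terms yields $\Delta_{t+1}\le \nu\,\Delta_t + L_{t2}L_{\pi2}\|\theta-\theta'\|$ with $\nu=L_{t1}+L_{t2}L_{\pi1}$, where the per-step forcing term $L_{t2}L_{\pi2}\|\theta-\theta'\|$ is exactly the fresh discrepancy injected by the direct dependence of the policy on $\theta$.

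The remaining computation is routine: unrolling this recurrence from $\Delta_0=0$ produces a geometric sum, giving $\Delta_t \le L_{t2}L_{\pi2}\|\theta-\theta'\|\,\tfrac{\nu^t-1}{\nu-1}$. Finally I would invoke the $L_r$-Lipschitzness of the reward (Assumption~\ref{assump:lipschitzR}) together with the triangle inequality over the discounted sum to obtain $|R(\theta)-R(\theta')|\le \sum_{t=0}^T \gamma^t L_r \Delta_t$, and substituting the bound on $\Delta_t$ gives exactly $\beta=\sum_{t=0}^T \gamma^t L_r L_{t2} L_{\pi2}\tfrac{\nu^t-1}{\nu-1}$, as claimed.

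I expect the main obstacle to be getting the one-step recursion right, specifically the two-level triangle-inequality decomposition: the existing state perturbation propagates through the policy, contributing $L_{t2}L_{\pi1}\Delta_t$, which must be combined with the direct transition sensitivity $L_{t1}\Delta_t$ into the single amplification factor $\nu$, while the parameter change enters only through the policy's $\theta$-Lipschitz constant $L_{\pi2}$ and acts as the constant forcing term. Keeping these two effects cleanly separated is the delicate part; once the affine recurrence is established, the geometric-series solution and the passage to $R$ are immediate.
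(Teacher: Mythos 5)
Your proposal is correct and follows essentially the same route as the paper's proof: the same two-level triangle-inequality decomposition yielding the affine recursion $\Delta_{t+1}\le \nu\,\Delta_t + L_{t2}L_{\pi2}\|\theta-\theta'\|$ with $\Delta_0=0$, the same geometric-series unrolling, and the same final passage through the $L_r$-Lipschitz reward and discounted sum. No gaps.
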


\begin{proof}
Let's denote $s_t'=s_t(\theta')$, and $s_t=s_t(\theta)$. We start by investigating the policy function across different time steps:
\begin{align}
    &\|\pi(s_t';\theta')-\pi(s_t;\theta)\| \nonumber\\
    &= \|\pi(s_t';\theta')-\pi(s_t;\theta')+\pi(s_t;\theta')-\pi(s_t;\theta)\|\nonumber\\
    &\leq \|\pi(s_t';\theta')-\pi(s_t;\theta')\|+\|\pi(s_t;\theta')-\pi(s_t;\theta)\|\nonumber\\
    &\leq L_{\pi1}\|s_t'- s_t\|+L_{\pi2}\|\theta'-\theta\|\label{app:pi}
\end{align}
The first inequality is the triangle inequality, and the second is from our Lipschitz assumption \ref{assump:lipschitzP}.

If we look at the change of states as the episode proceeds:
\begin{align}
    &\|s_t'-s_t\|\nonumber\\
    &=\|\mathcal{T}(s_{t-1}', \pi(s_{t-1}';\theta'), \xi_{t-1})-\mathcal{T}(s_{t-1}, \pi(s_{t-1};\theta), \xi_{t-1})\|\nonumber\\
    &\leq\|\mathcal{T}(s_{t-1}', \pi(s_{t-1}';\theta'), \xi_{t-1}) - \mathcal{T}(s_{t-1}, \pi(s_{t-1}';\theta'), \xi_{t-1})\| \nonumber\\&+ \|\mathcal{T}(s_{t-1}, \pi(s_{t-1}';\theta'), \xi_{t-1})-\mathcal{T}(s_{t-1}, \pi(s_{t-1};\theta), \xi_{t-1})\|\nonumber\\
    &\leq L_{t1}\|s_{t-1}'-s_{t-1}\| + L_{t2}\|\pi(s_{t-1}';\theta')- \pi(s_{t-1};\theta)\|\label{app:s}
\end{align}

Now combine both (\ref{app:pi}) and (\ref{app:s}),
\begin{align}
    &\|s_t'-s_t\|\nonumber\\
    &\leq L_{t1}\|s_{t-1}'-s_{t-1}\|\nonumber\\ &+ L_{t2}(L_{\pi1}\|s_{t-1}'- s_{t-1}\|+L_{\pi2}\|\theta'-\theta\|) \nonumber\\
    &\leq (L_{t1}+ L_{t2}L_{\pi1})\|s_{t-1}'-s_{t-1}\|+L_{t2}L_{\pi2}\|\theta'-\theta\| \nonumber
\end{align}

In the initialization, we know $s_0'=s_0$ since the initialization process does not involve any computation using the parameter $\theta$ in the policy $\pi$.

By recursion, we get
\begin{align}
\|s_t'-s_t\|&\leq L_{t_2}L_{\pi2}\|\theta'-\theta\|\sum_{t=0}^{t-1}(L_{t1}+L_{t2}L_{\pi1})^t\nonumber\\
&=L_{t_2}L_{\pi2}\frac{\nu^t-1}{\nu-1}\|\theta'-\theta\|\nonumber
\end{align}
where $\nu=L_{t1}+L_{t2}L_{\pi1}$.

By assumption \ref{assump:lipschitzR}, $r(s)$ is $L_r$-Lipschitz, so
\begin{align}
    \|r(s_t')-r(s_t)\|&\leq L_r\|s_t'-s_t\|\nonumber\\
    &\leq L_rL_{t_2}L_{\pi2}\frac{\nu^t-1}{\nu-1}\|\theta'-\theta\|\nonumber
\end{align}

So the reward 
\begin{align}
    &|R(s')-R(s)| = |\sum_{t=0}^T \gamma^t r(s_t')-\sum_{t=0}^T \gamma^t r(s_t)|\nonumber\\
    &\leq |\sum_{t=0}^T \gamma^t (r(s_t')- r(s_t))|
    \leq \sum_{t=0}^T \gamma^t |r(s_t')- r(s_t))|\nonumber\\   
    &\leq \sum_{t=0}^T \gamma^t L_rL_{t_2}L_{\pi2}\frac{\nu^t-1}{\nu-1}\|\theta'-\theta\|=\beta\|\theta'-\theta\|\nonumber
\end{align}

\end{proof}

\section{Proof of Lemma \ref{lemma:initchange}}

\begin{lemma*}
In reparameterizable RL, suppose the initialization function $\mathcal{I}'$ in the test environment satisfies $\|(\mathcal{I}'-\mathcal{I})(\xi)\|\leq \delta$, and the transition function is the same for both training and testing environment. If assumptions (\ref{assump:lipschitzT}), (\ref{assump:lipschitzP}), and (\ref{assump:lipschitzR}) hold then 
\begin{align}
    |\mathbb{E}_\xi[R(s(\xi;\mathcal{I}'))] - \mathbb{E}_\xi[R(s(\xi;\mathcal{I}))]|\leq\nonumber\\ \sum_{t=0}^{T} \gamma^{t} L_r (L_{t1}+L_{t2}L_{\pi1})^t\delta \nonumber
\end{align}
\end{lemma*}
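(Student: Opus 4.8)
The plan is to reuse the state-perturbation machinery from the proof of Lemma~3, but with the discrepancy injected at initialization rather than through the policy parameter. First I would couple the two trajectories by driving them with the \emph{same} realization of the peripheral noise $\xi=[\xi_0,\dots,\xi_T]$: since both expectations are taken over the same distribution of $\xi$, I can write
\begin{align}
\mathbb{E}_\xi[R(s(\xi;\mathcal{I}'))]-\mathbb{E}_\xi[R(s(\xi;\mathcal{I}))]=\mathbb{E}_\xi\bigl[R(s(\xi;\mathcal{I}'))-R(s(\xi;\mathcal{I}))\bigr],\nonumber
\end{align}
so that, since $|\mathbb{E}[\cdot]|\le\mathbb{E}|\cdot|$, it suffices to bound $|R(s(\xi;\mathcal{I}'))-R(s(\xi;\mathcal{I}))|$ for each fixed $\xi$ and then take the expectation.

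Next I would track the divergence $\|s_t'-s_t\|$ of the two coupled trajectories, writing $s_t=s_t(\xi;\mathcal{I})$ and $s_t'=s_t(\xi;\mathcal{I}')$. At $t=0$ the transition has not yet acted, so the only discrepancy is the initialization itself: $\|s_0'-s_0\|=\|(\mathcal{I}'-\mathcal{I})(\xi_0)\|\le\delta$. For the recursive step, since both runs use the \emph{same} transition $\mathcal{T}$, the \emph{same} policy parameter $\theta$, and the \emph{same} noise $\xi_{t-1}$, I would insert the intermediate term $\mathcal{T}(s_{t-1},\pi(s_{t-1}'),\xi_{t-1})$ and apply Assumptions~\ref{assump:lipschitzT} and~\ref{assump:lipschitzP} exactly as in (\ref{app:pi})--(\ref{app:s}) to obtain $\|s_t'-s_t\|\le(L_{t1}+L_{t2}L_{\pi1})\|s_{t-1}'-s_{t-1}\|=\nu\|s_{t-1}'-s_{t-1}\|$. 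The key structural point, and the reason the bound differs from Lemma~3, is that here $\theta$ is shared by the two runs, so the additive term $L_{t2}L_{\pi2}\|\theta'-\theta\|$ vanishes; the error is seeded only once, at $t=0$, and then propagates purely geometrically, giving $\|s_t'-s_t\|\le\nu^t\delta$ by unrolling the recursion.

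Finally I would convert the state gap into a reward gap. Using the triangle inequality on the discounted sum and the $L_r$-Lipschitzness of $r$ from Assumption~\ref{assump:lipschitzR}, I get $|R(s')-R(s)|\le\sum_{t=0}^T\gamma^t L_r\|s_t'-s_t\|\le L_r\delta\sum_{t=0}^T\gamma^t\nu^t$, and since the right-hand side is deterministic, taking the expectation over $\xi$ preserves it and yields the claimed bound. I do not expect a genuine obstacle: the argument is a direct specialization of Lemma~3 with a single-step error source. The only point that warrants care is the coupling and the bookkeeping of which objects ($\mathcal{T}$, $\theta$, and each $\xi_t$) are held common across the two trajectories, since it is precisely the shared $\theta$ that collapses the cumulative factor $\frac{\nu^t-1}{\nu-1}$ of Lemma~3 down to the clean geometric factor $\nu^t$ appearing here.
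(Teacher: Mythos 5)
Your proposal is correct and follows essentially the same route as the paper's own proof: couple the two trajectories on the same noise, seed the discrepancy $\|s_0'-s_0\|\le\delta$ at initialization, propagate it geometrically via $\|s_t'-s_t\|\le\nu\|s_{t-1}'-s_{t-1}\|$ using the same triangle-inequality insertion of $\mathcal{T}(s_{t-1},\pi(s_{t-1}'),\xi_{t-1})$, and convert to the reward gap with $L_r$ and the discounted sum before taking expectations. Your remark on why the shared $\theta$ collapses the cumulative factor of Lemma~3 to the clean $\nu^t$ is exactly the right observation.
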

\begin{proof}
Denote the states at time $t$ with $\mathcal{I}'$ as the initialization function as $s_t'$.
Again we look at the difference between $s_t'$ and $s_t$. By triangle inequality and assumptions \ref{assump:lipschitzT} and \ref{assump:lipschitzP},
\begin{align}
    &\|s_t'-s_t\|\nonumber\\
    &=\|\mathcal{T}(s_{t-1}', \pi(s_{t-1}'), \xi_{t-1}) - \mathcal{T}(s_{t-1}, \pi(s_{t-1}), \xi_{t-1})\|\nonumber\\
    &\leq\|\mathcal{T}(s_{t-1}', \pi(s_{t-1}'), \xi_{t-1}) -\mathcal{T}(s_{t-1}, \pi(s_{t-1}'), \xi_{t-1})\|\nonumber\\ &+\|\mathcal{T}(s_{t-1}, \pi(s_{t-1}'), \xi_{t-1})-\mathcal{T}(s_{t-1}, \pi(s_{t-1}), \xi_{t-1})\|\nonumber\\
    &\leq L_{t1}\|s_{t-1}'-s_{t-1}\| +L_{t2}\|\pi(s_{t-1}')- \pi(s_{t-1})\|\nonumber\\
    &\leq L_{t1}\|s_{t-1}'-s_{t-1}\| +L_{t2}L_{\pi1}\|s_{t-1}'- s_{t-1}\|\nonumber\\
    &= (L_{t1} +L_{t2}L_{\pi1})\|s_{t-1}'- s_{t-1}\|\nonumber\\
    &\leq (L_{t1} +L_{t2}L_{\pi1})^t\|s_0'-s_0\|\nonumber\\
    &\leq (L_{t1} +L_{t2}L_{\pi1})^t\delta\nonumber
\end{align}
where the last inequality is due to the assumption that 
\begin{align}
    \|s_0'-s_0\| = \|\mathcal{I}'(\xi)-\mathcal{I}(\xi)\|\leq \delta\nonumber
\end{align}

Also since $r(s)$ is also Lipschitz,
\begin{align}
    &|R(s')-R(s)|= |\sum_{t=0}^T \gamma^t r(s_t')-\sum_{t=0}^T \gamma^t r(s_t)|\nonumber\\
    &\leq \sum_{t=0}^T \gamma^t |r(s_t')- r(s_t)|\leq\sum_{t=0}^T \gamma^t L_r\|s_t'- s_t\|\nonumber\\ 
    &\leq L_r\delta\sum_{t=0}^T\gamma^t(L_{t1} +L_{t2}L_{\pi1})^t\nonumber
\end{align}
The argument above holds for any given random input $\xi$, so
\begin{align}
    &|\mathbb{E}_\xi[R(s'(\xi)]-\mathbb{E}_\xi[R(s(\xi)]|\nonumber\\
    &\leq \left|\int_\xi \left(R(s'(\xi))-R(s(\xi))\right)\right|\nonumber\\
    &\leq \int_\xi \left|R(s'(\xi))-R(s(\xi))\right|\nonumber\\    
    &\leq L_r\delta\sum_{t=0}^T\gamma^t(L_{t1} +L_{t2}L_{\pi1})^t\nonumber
\end{align}

\end{proof}

\section{Proof of Lemma \ref{lemma:external}}
\begin{lemma*}
In reparameterizable RL, suppose the transition $\mathcal{T}'$ in the test environment satisfies $\forall x, y,z, \|(\mathcal{T}'-\mathcal{T})(x,y,z)\|\leq \delta$, and the initialization is the same for both the training and testing environment. If assumptions (\ref{assump:lipschitzT}), (\ref{assump:lipschitzP}) and (\ref{assump:lipschitzR}) hold then 
\begin{align}
    |\mathbb{E}_\xi[R(s(\xi;\mathcal{T}'))] - \mathbb{E}_\xi[R(s(\xi;\mathcal{T}))]|\leq \sum_{t=0}^T \gamma^{t} L_r \frac{1-\nu^t}{1-\nu}\delta 
\end{align}
where $\nu=L_{t1}+L_{t2}L_{\pi1}$
\end{lemma*}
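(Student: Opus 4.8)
The plan is to mirror the proof of Lemma~\ref{lemma:initchange}, tracking the divergence between the two state trajectories, except that now the perturbation is injected at every transition step rather than only at initialization. I denote by $s_t$ the state at time $t$ generated by the training transition $\mathcal{T}$, and by $s_t'$ the state generated by the test transition $\mathcal{T}'$, both driven by the same peripheral noise $\xi$ and the same initialization $\mathcal{I}$, so that $s_0' = s_0$ and the discrepancy is seeded entirely by the mismatch between $\mathcal{T}$ and $\mathcal{T}'$.

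The key step is to establish a recursion for $\|s_t' - s_t\|$. I expand $s_t' = \mathcal{T}'(s_{t-1}', \pi(s_{t-1}'), \xi_{t-1})$ and $s_t = \mathcal{T}(s_{t-1}, \pi(s_{t-1}), \xi_{t-1})$, then insert the intermediate term $\mathcal{T}(s_{t-1}', \pi(s_{t-1}'), \xi_{t-1})$ and apply the triangle inequality. This separates the discrepancy into two pieces: the first, $\|\mathcal{T}'(s_{t-1}', \pi(s_{t-1}'), \xi_{t-1}) - \mathcal{T}(s_{t-1}', \pi(s_{t-1}'), \xi_{t-1})\|$, is bounded by $\delta$ using the hypothesis $\|(\mathcal{T}'-\mathcal{T})(x,y,z)\|\leq \delta$; the second involves only $\mathcal{T}$ and is controlled exactly as in Lemma~\ref{lemma:initchange}, by inserting a further intermediate term and applying Assumptions~\ref{assump:lipschitzT} and \ref{assump:lipschitzP}, yielding the factor $\nu = L_{t1} + L_{t2}L_{\pi1}$. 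Combining the two pieces gives the inhomogeneous linear recursion $\|s_t' - s_t\| \leq \nu\|s_{t-1}' - s_{t-1}\| + \delta$ with base case $\|s_0' - s_0\| = 0$.

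Unrolling this recursion produces a geometric sum, $\|s_t' - s_t\| \leq \delta\sum_{k=0}^{t-1}\nu^k = \delta\,\frac{\nu^t - 1}{\nu - 1}$. From here the argument is routine: the $L_r$-Lipschitz bound from Assumption~\ref{assump:lipschitzR} gives $|r(s_t') - r(s_t)| \leq L_r\,\delta\,\frac{\nu^t-1}{\nu-1}$; summing over the discounted horizon bounds $|R(s') - R(s)|$ pointwise in $\xi$; and passing the absolute value inside the expectation (triangle inequality for integrals) yields the stated bound.

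The main structural point, and the essential difference from Lemma~\ref{lemma:initchange}, is that the transition mismatch $\delta$ is injected fresh at each step and then amplified by $\nu$ on all subsequent steps, producing the summed factor $\frac{\nu^t-1}{\nu-1}$ rather than the single power $\nu^t$ seen for an initialization shift. There is no genuine analytic obstacle here; the only care required is the correct double insertion of intermediate terms in the recursion, so that the function-change contribution (controlled by $\delta$) is cleanly separated from the argument-change contribution (controlled by the Lipschitz constants).
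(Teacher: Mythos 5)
Your proof is correct and follows essentially the same route as the paper's: the same triangle-inequality decomposition into a function-change piece (bounded by $\delta$) and an argument-change piece (bounded via the Lipschitz constants), the same inhomogeneous recursion $\|s_t'-s_t\|\leq \nu\|s_{t-1}'-s_{t-1}\|+\delta$ with $s_0'=s_0$, the same geometric unrolling, and the same passage through Assumption~\ref{assump:lipschitzR} and the expectation. The one small difference is which intermediate term you insert: you place the $\delta$-bound on $\mathcal{T}'-\mathcal{T}$ at the primed arguments so that the Lipschitz constants are applied to the training transition $\mathcal{T}$ (which is what Assumption~\ref{assump:lipschitzT} literally covers), whereas the paper applies them to $\mathcal{T}'$ and thus implicitly assumes the test transition is also Lipschitz --- your ordering is the slightly cleaner of the two.
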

\begin{proof}
Again let's denote the state at time t with the new transition function $\mathcal{T}'$ as $s_t'$, and the state at time t with the original transition function $\mathcal{T}$ as $s_t$, then
\begin{align}
&\|s_t'-s_t\|\nonumber\\
&=\|\mathcal{T}'(s_{t-1}',\pi(s_{t-1}'), \xi_{t-1}) - \mathcal{T}(s_{t-1},\pi(s_{t-1}),\xi_{t-1}) \|\nonumber\\
&\leq\|\mathcal{T}'(s_{t-1}',\pi(s_{t-1}'), \xi_{t-1}) -\mathcal{T}'(s_{t-1},\pi(s_{t-1}), \xi_{t-1})\|+\nonumber\\ &\|\mathcal{T}'(s_{t-1},\pi(s_{t-1}), \xi_{t-1})-\mathcal{T}(s_{t-1},\pi(s_{t-1}),\xi_{t-1}) \|\nonumber\\
&\leq \|\mathcal{T}'(s_{t-1}',\pi(s_{t-1}'), \xi_{t-1})-\mathcal{T}'(s_{t-1},\pi(s_{t-1}'), \xi_{t-1})\|\nonumber\\&+\| \mathcal{T}'(s_{t-1},\pi(s_{t-1}'), \xi_{t-1})-\mathcal{T}'(s_{t-1},\pi(s_{t-1}), \xi_{t-1})\|+ \delta\nonumber\\
&\leq L_{t1}\|s_{t-1}'-s_{t-1}\| + L_{t2}\|\pi(s_{t-1}')-\pi(s_{t-1})\| + \delta\nonumber\\
&\leq L_{t1}\|s_{t-1}'-s_{t-1}\| + L_{t2}L_{\pi1}\|s_{t-1}'-s_{t-1}\| + \delta\nonumber\\
&= (L_{t1} + L_{t2}L_{\pi1})\|s_{t-1}'-s_{t-1}\| + \delta\nonumber
\end{align}
Again we have the initialization condition
\begin{align}
    s_0'=s_0\nonumber
\end{align}
since the initialization procedure $\mathcal{I}$ stays the same.
By recursion we have
\begin{align}
    \|s_t'-s_t\|\leq \delta \sum_{t=0}^{t-1}(L_{t1}+L_{t2}L_{\pi1})^t
\end{align}
By assumption \ref{assump:lipschitzR},
\begin{align}
    &|R(s')-R(s)|= |\sum_{t=0}^T \gamma^t r(s_t')-\sum_{t=0}^T \gamma^t r(s_t)|\nonumber\\
    &\leq \sum_{t=0}^T \gamma^t |r(s_t')- r(s_t)|\leq\sum_{t=0}^T \gamma^t L_r\|s_t'- s_t\|\nonumber\\ 
    &\leq L_r\delta\sum_{t=0}^T\gamma^t\left(\sum_{k=0}^{t-1}(L_{t1} +L_{t2}L_{\pi1})^k\right)\nonumber\\
    &\leq L_r\delta\sum_{t=0}^T\gamma^t\frac{\nu^t-1}{\nu-1}\nonumber    
\end{align}
where $\nu=L_{t1} +L_{t2}L_{\pi1}$.
Again the argument holds for any given random input $\xi$, so 
\begin{align}
    &|\mathbb{E}_\xi[R(s'(\xi)]-\mathbb{E}_\xi[R(s(\xi)]|\nonumber\\
    &\leq \left|\int_\xi \left(R(s'(\xi))-R(s(\xi))\right)\right|\nonumber\\
    &\leq \int_\xi \left|R(s'(\xi))-R(s(\xi))\right|\nonumber\\
    &\leq L_r\delta\sum_{t=0}^T\gamma^t\frac{\nu^t-1}{\nu-1}\nonumber
\end{align}
\end{proof}

\section{Proof of Theorem \ref{thm: main}}
\begin{theorem*}
In \textbf{reparameterizable RL}, suppose the transition $\mathcal{T}'$ in the test environment satisfies $\forall x, y,z, \|(\mathcal{T}'-\mathcal{T})(x,y,z)\|\leq \zeta$, and suppose the initialization function $\mathcal{I}'$ in the test environment satisfies $\forall \xi, \|(\mathcal{I}'-\mathcal{I})(\xi)\|\leq \epsilon$. If assumptions (\ref{assump:lipschitzT}), (\ref{assump:lipschitzP}) and (\ref{assump:lipschitzR}) hold, the peripheral random variables $\xi^i$ for each episode are i.i.d., and the reward is bounded $|R(\boldsymbol s)|\leq c/2$, then with probability at least $1-\delta$, for all policy $\pi\in\Pi$,
\begin{align}
    &|\mathbb{E}_\xi[R(\boldsymbol s(\xi;\pi, \mathcal{T}', \mathcal{I}'))] - \frac{1}{n}\sum_i R(\boldsymbol s(\xi^i;\pi,\mathcal{T}, \mathcal{I}))|\nonumber\\
    &\leq  Rad(R_{\pi,\mathcal{T},\mathcal{I}}) + L_r \zeta\sum_{t=0}^T \gamma^{t} \frac{\nu^t-1}{\nu-1} + L_r \epsilon\sum_{t=0}^{T} \gamma^{t}  \nu^t\nonumber\\&+ O\left(c\sqrt{\frac{\log (1/\delta)}{n}}\right)\nonumber
\end{align}
where $\nu=L_{t1}+L_{t2}L_{\pi1}$, and  $$Rad(R_{\pi,\mathcal{T},\mathcal{I}}) = \mathbb{E}_{\xi}\mathbb{E}_\sigma\left[\sup_\pi \frac{1}{n}\sum_{i=1}^n \sigma_i R({\boldsymbol s}^i(\xi^i;\pi,\mathcal{T}, \mathcal{I}))\right]$$ is the Rademacher complexity of $R(\boldsymbol s(\xi;\pi,\mathcal{T},\mathcal{I}))$ under the training transition $\mathcal{T}$, the training initialization $\mathcal{I}$, and $n$ is the number if training episodes.
\end{theorem*}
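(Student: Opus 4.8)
The plan is to reduce the statement to the three results already proved---Lemma~\ref{lemma:rad} (intrinsic gap), Lemma~\ref{lemma:initchange} (initialization shift), and Lemma~\ref{lemma:external} (transition shift)---by inserting a single \emph{hybrid} environment and applying the triangle inequality. First I would introduce the hybrid expected return that uses the \emph{training} transition $\mathcal{T}$ together with the \emph{test} initialization $\mathcal{I}'$, namely $H = \mathbb{E}_\xi[R(\boldsymbol s(\xi;\pi,\mathcal{T},\mathcal{I}'))]$. Abbreviating the test-environment expectation $A=\mathbb{E}_\xi[R(\boldsymbol s(\xi;\pi,\mathcal{T}',\mathcal{I}'))]$, the training-environment expectation $B=\mathbb{E}_\xi[R(\boldsymbol s(\xi;\pi,\mathcal{T},\mathcal{I}))]$, and the training empirical return $C=\frac{1}{n}\sum_i R(\boldsymbol s(\xi^i;\pi,\mathcal{T},\mathcal{I}))$, the triangle inequality yields $|A-C|\le |A-H|+|H-B|+|B-C|$, and these three summands already match the three terms of the claimed bound.

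Next I would bound each piece by the corresponding lemma. The term $|A-H|$ compares two runs that share the same initialization $\mathcal{I}'$ and differ only in the transition ($\mathcal{T}'$ versus $\mathcal{T}$), so it is a pure transition shift; applying Lemma~\ref{lemma:external} with the common initialization taken to be $\mathcal{I}'$ gives $|A-H|\le L_r\zeta\sum_{t=0}^{T}\gamma^t\frac{\nu^t-1}{\nu-1}$. The term $|H-B|$ compares two runs that share the transition $\mathcal{T}$ and differ only in the initialization ($\mathcal{I}'$ versus $\mathcal{I}$, with $\|(\mathcal{I}'-\mathcal{I})(\xi)\|\le\epsilon$), so Lemma~\ref{lemma:initchange} gives $|H-B|\le L_r\epsilon\sum_{t=0}^{T}\gamma^t\nu^t$. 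Finally $|B-C|$ is exactly the intrinsic generalization gap in the training environment, so Lemma~\ref{lemma:rad} gives $|B-C|\le Rad(R_{\pi,\mathcal{T},\mathcal{I}})+O\!\left(c\sqrt{\log(1/\delta)/n}\right)$ uniformly over all $\pi\in\Pi$ with probability at least $1-\delta$. Summing the three bounds produces the stated inequality.

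The only nonroutine point, and the step I expect to need care, is the reuse of Lemma~\ref{lemma:external} with $\mathcal{I}'$ as the common initialization, since its statement is phrased for the training initialization $\mathcal{I}$. I would check that the lemma's proof uses the initialization \emph{only} through the identity $s_0'=s_0$, which holds here because both runs in $|A-H|$ start from $\mathcal{I}'$, and that the contraction governing the recursion for $\|s_t'-s_t\|$ passes through $\mathcal{T}'$, so we implicitly need $\mathcal{T}'$ to obey the same Lipschitz constants $L_{t1},L_{t2}$ as $\mathcal{T}$; both facts hold under Assumption~\ref{assump:lipschitzT} applied to the test transition. An equally valid alternative routes the hybrid through $(\mathcal{T}',\mathcal{I})$, turning $|A-H|$ into an initialization shift under $\mathcal{T}'$ and leaving $|H-B|$ as a verbatim instance of Lemma~\ref{lemma:external}; this needs the same implicit Lipschitzness of $\mathcal{T}'$. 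The probability bookkeeping is then trivial, because $|A-H|$ and $|H-B|$ are deterministic bounds on differences of expectations and only $|B-C|$ is stochastic, so the $1-\delta$ confidence is inherited directly from Lemma~\ref{lemma:rad}.
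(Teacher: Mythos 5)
Your proof is correct and takes essentially the same approach as the paper: a triangle inequality through an intermediate hybrid environment, with the three summands bounded by Lemma~\ref{lemma:rad}, Lemma~\ref{lemma:initchange}, and Lemma~\ref{lemma:external}. The paper routes its hybrid through $(\mathcal{T}',\mathcal{I})$ rather than your $(\mathcal{T},\mathcal{I}')$ --- exactly the alternative you mention --- and, unlike you, it does not flag the subtlety that the shifted component must itself satisfy the Lipschitz assumptions for the reused lemma to go through.
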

\begin{proof}
Note
\begin{align}
    &|\frac{1}{n}\sum_i R(s(\xi^i;\pi,\mathcal{T}, \mathcal{I}))-\mathbb{E}_\xi[R(s(\xi;\pi, \mathcal{T}', \mathcal{I'}))]|\nonumber\\
    &\leq |\frac{1}{n}\sum_i R(s(\xi^i;\pi,\mathcal{T}, \mathcal{I}))-\mathbb{E}_\xi[R(s(\xi;\pi, \mathcal{T}, \mathcal{I}))]|\nonumber\\
    &+|\mathbb{E}_\xi[R(s(\xi;\pi, \mathcal{T}, \mathcal{I}))]-\mathbb{E}_\xi[R(s(\xi;\pi, \mathcal{T}', \mathcal{I}))]|\nonumber\\
    & + |\mathbb{E}_\xi[R(s(\xi;\pi, \mathcal{T}', \mathcal{I}))]-\mathbb{E}_\xi[R(s(\xi;\pi, \mathcal{T}', \mathcal{I}'))]|\nonumber
\end{align}
Then theorem \ref{thm: main} is a direct consequence of Lemma \ref{lemma:rad}, Lemma \ref{lemma:initchange}, and Lemma \ref{lemma:external}.
\end{proof}
\end{document}